\newcommand\HUGE{\@setfontsize\Huge{20}{30}}
\def\eqref#1{equation~\ref{#1}}
\def\1{\bm{1}}
\DeclareMathAlphabet{\mathsfit}{\encodingdefault}{\sfdefault}{m}{sl}
\SetMathAlphabet{\mathsfit}{bold}{\encodingdefault}{\sfdefault}{bx}{n}
\newtheorem{proposition}{Proposition}
\theoremstyle{remark}
\newtheorem{remark}{Remark}[proposition]
\providecommand{\FullStop}{\text{~\@.\xspace}}
\providecommand{\Comma}{\text{~,\xspace}}
\providecommand{\kuka}{\textsc{KUKA} LBR iiwa R820\xspace}
\crefname{section}{Sec.}{Secs.}
\Crefname{section}{Section}{Sections}
\Crefname{table}{Table}{Tables}
\crefname{table}{Tab.}{Tabs.}
\begin{document}


\title{Language-driven Grasp Detection}
\author{An Dinh Vuong$^{1}$, Minh Nhat Vu$^{2,*}$, Baoru Huang$^{3,*}$, Nghia Nguyen$^{1}$, Hieu Le$^{1}$, Thieu Vo$^{4}$, Anh Nguyen$^{5}$\\
{\small $^{1}$FPT Software AI Center, Vietnam}
{\small $^{2}$Automation \& Control Institute, TU Wien, Austria}
{\small $^{3}$Imperial College London, UK} \\
{\small $^{4}$Ton Duc Thang University, Vietnam}
{\small $^{5}$University of Liverpool, UK}
{\small $^{*}$Co-Corresponding authors}\\
{\small \href{https://airvlab.github.io/grasp-anything}{https://airvlab.github.io/grasp-anything}}}


\twocolumn[{%
\renewcommand\twocolumn[1][]{#1}%
\maketitle
\begin{center}
  \centering
  \vspace{-3ex}
  \captionsetup{type=figure}
  \Large
\resizebox{\linewidth}{!}{
\setlength{\tabcolsep}{2pt}
\begin{tabular}{ccccc}
\shortstack{\includegraphics[width=0.33\linewidth]{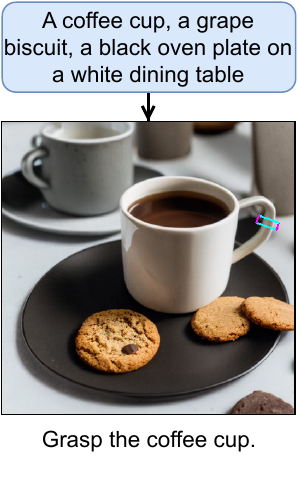}}&
\shortstack{\includegraphics[width=0.33\linewidth]{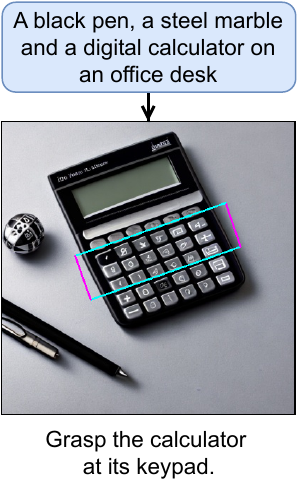}}&
\shortstack{\includegraphics[width=0.33\linewidth]{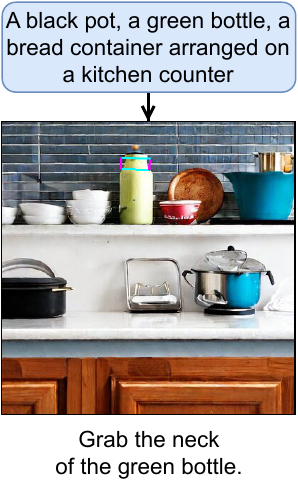}}&
\shortstack{\includegraphics[width=0.33\linewidth]{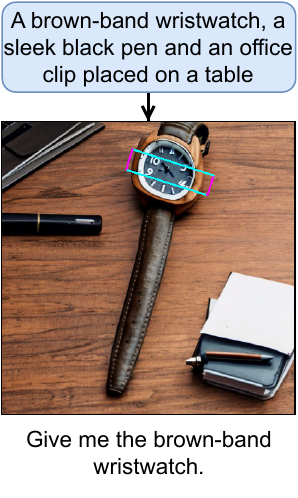}}&
\shortstack{\includegraphics[width=0.33\linewidth]{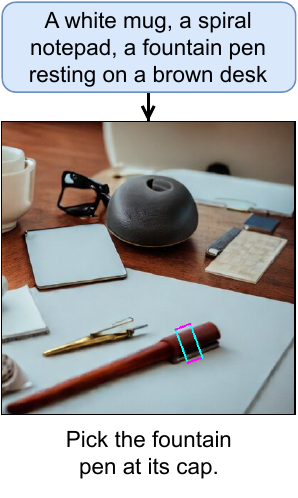}}\\[3pt]
\end{tabular}
}
\vspace{-2ex}
    \captionof{figure}{We present a new dataset and method for \textit{language-driven grasp} task.}
    \label{fig:IntroVis}
\end{center}%
}]


\begin{abstract}
\vspace{-3ex}
Grasp detection is a persistent and intricate challenge with various industrial applications. Recently, many methods and datasets have been proposed to tackle the grasp detection problem. However, most of them do not consider using natural language as a condition to detect the grasp poses. In this paper, we introduce Grasp-Anything++, a new language-driven grasp detection dataset featuring 1M samples, over 3M objects, and upwards of 10M grasping instructions. We utilize foundation models to create a large-scale scene corpus with corresponding images and grasp prompts. We approach the language-driven grasp detection task as a conditional generation problem. Drawing on the success of diffusion models in generative tasks and given that language plays a vital role in this task, we propose a new language-driven grasp detection method based on diffusion models. Our key contribution is the contrastive training objective, which explicitly contributes to the denoising process to detect the grasp pose given the language instructions. We illustrate that our approach is theoretically supportive. The intensive experiments show that our method outperforms state-of-the-art approaches and allows real-world robotic grasping. Finally, we demonstrate our large-scale dataset enables zero-short grasp detection and is a challenging benchmark for future work. 

%

\end{abstract}

\section{Introduction}
Imagine we want an assistant robot to grasp a cup among a clutter of daily objects such as a knife, a fork, a cup, and a pair of scissors. Conventionally, to convey the idea of grasping this specific object, humans use the natural language command, ``give me the cup", for instance. Although humans intuitively know how to grasp the cup given the linguistic command, determining specific grasp actions for objects based on natural language instructions or \textit{language-driven grasp detection} remains challenging for robots~\cite{shridhar2022cliport}. First, natural language is usually overlooked in existing grasp datasets~\cite{platt2023grasp} while training vision-and-language neural networks necessitates an excessive number of labeled examples~\cite{song2023llm}. Second, recent works usually focus on particular manipulation tasks with limited objects~\cite{gilles2022metagraspnet}, imposing a bottleneck for in-the-wild robot execution~\cite{pmlr-v205-shah23b}. Finally, despite recent developments, bridging the gap between language, vision, and control for real-world robotic experiments remains a challenging task~\cite{yang2023pave}.


Recently, language-driven robotic frameworks are gaining traction, offering the potential for robots to process natural language, and bridging the gap between robotic manipulations and real-world human-robot interaction~\cite{mu2023embodiedgpt}. PaLM-E~\cite{driess2023palm}, EgoCOT~\cite{mu2023embodiedgpt}, and ConceptFusion~\cite{jatavallabhula2023conceptfusion} are some notable embodied robots with the ability to comprehend natural language by harnessing the power of large foundation models such as ChatGPT~\cite{openai2021chatgpt}. However, most works assume the high-level actions of robots and ignore the fundamental grasping actions, restricting the structure for generalization across robotic domains, tasks, and
skills~\cite{mu2023ec2}. In this paper, we explore training a language-driven agent to implement low-level actions, focusing on the task of object grasping via image observations. Specifically, our hypothesis is centered around the establishment of a robotic system that can execute grasping actions following a given language instruction for any universal object.

We first present \textbf{Grasp-Anything++} to serve as a large-scale dataset for language-driven grasp detection. Our dataset is based on the Grasp-Anything~\cite{vuong2023grasp}, and is synthesized from foundation models. Compared to the original Grasp-Anything dataset, we provide more than 10M grasp prompts and 3M associated object masks, 6M ground truth poses at the object part level. Our dataset showcases the ability to facilitate grasp detection using language instructions. We label the ground truth at both the object level and part level, providing a comprehensive understanding of real-world scenarios. For example, our ground truth includes both general instructions ``\textit{give me the knife}" and detail ones such as ``\textit{grasp the handle of the steak knife}". We empirically show that our large-scale dataset successfully facilitates zero-shot grasp detection on both vision-based tasks and real-world robotic experiments.


To tackle the challenging language-driven grasp detection task, we propose a new diffusion model-based method. Our selection of diffusion models is motivated by their proven efficacy in conditional generation tasks~\cite{ho2020denoising}. These models have shown efficiency beyond image synthesis, including other image-based tasks such as 
image segmentation~\cite{wolleb2022diffusion}, and visual grounding~\cite{li2023gligen}. Despite achieving notable success, integrating visual and text features effectively remains a challenge~\cite{chen2023language} as the majority of existing literature employs latent strategies to combine visual and text features~\cite{lee2023text}. We address this challenge by employing a new training strategy for learning text and image features, focusing on the use of feature maps as guidance information for grasp pose generation. Our main contribution is a new training objective that incorporates the feature maps and explicitly contributes to the denoising process. In summary, our contributions are three-fold:

\begin{itemize}[leftmargin=*]
    \item We propose Grasp-Anything++, a large-scale language-driven dataset for grasp detection tasks.
    \item We propose a diffusion model with a training objective that explicitly contributes to the denoising process to detect the grasp poses.
    \item We demonstrate that our Grasp-Anything++ dataset and the proposed method outperform other approaches and enable successful robotic applications. 
\end{itemize}
\section{Related Work}
\textbf{Grasp Detection.} Grasp detection is a popular task in both computer vision and robotic community~\cite{depierre2018jacquard,nguyen2016preparatory,shridhar2022cliport, ainetter2021end, fang2020graspnet}. Recently, establishing robotic systems with the ability to follow natural commands has been actively researched~\cite{yang2023pave, shridhar2022cliport, xu2023joint}. The prevalent solution to the language-driven grasp detection task is to split into two stages: one for grounding the target object, and the other is to synthesize grasp poses from the grounding visual-text correlations~\cite{xu2023joint, ahn2022can}. Training in two stages may result in longer inference time~\cite{liu2023target}. In addition, several works~\cite{yang2023pave,xu2023object, zheng2022vlmbench} adopt foundation models, such as GroundDINO~\cite{liu2023grounding} and GPT-3~\cite{brown2020language}. Accessing such commercial foundation models is not always available~\cite{vemprala2023chatgpt}, especially on robotic systems with limited resources or unstable internet connection~\cite{lee2022task}. 
In our work, we directly train the model on the large-scale Grasp-Anything++ dataset to inherit the power of a foundation-based dataset, while ensuring a straightforward inference process for the downstream robotic applications.

\textbf{Language-driven Grasp Detection Datasets.} While there are many grasp datasets have been introduced~\cite{jiang2011efficient, depierre2018jacquard, pinto2016supersizing, levine2018learning, xiang2017posecnn, mahler2017dex, mousavian20196, eppner2019billion, fang2020graspnet, morrison2020egad, eppner2021acronym, cao2023nbmod}, the majority of them overlook the text modality. Therefore, the grasping of objects out of a clutter typically experiences ambiguities in what object to grasp~\cite{zhang2021invigorate}. 
DailyGrasp~\cite{yang2023pave} is one of the first grasp datasets employing natural language for scene descriptions; however, the scene description corpus in this dataset is relatively small and does not specify which part of the object should be grasped. In our work, we present Grasp-Anything++, which is a large-scale language-driven grasping dataset. 
Furthermore, Grasp-Anything++ describes the grasping object at both the part level and object level, providing more information for the robot to execute the grasping~\cite{yang2022interactive}.

\begin{table*}[!hbt]
\footnotesize
\centering
\setlength\tabcolsep{2pt}
\setlength\extrarowheight{1pt}
\begin{tabular}{p{0.08\linewidth} p{0.065\linewidth} p{0.68\linewidth} p{0.14\linewidth}} %
\toprule 
\textbf{Step} & \multicolumn{3}{c}{\textbf{Description} }
\\
\midrule

\multirow{3}{*}{\textbf{\makecell[l]{Scene\\ Generation}} } & 
\textbf{User} & 
Please help me generate scene descriptions for natural arrangements of daily objects. Each description has the following form: {\textless{\texttt{Object\_1}}\textgreater}{\textless{\texttt{Object\_2}}\textgreater}...{\textless{\texttt{Verb}}\textgreater}{\textless{\texttt{Container\_Object}}\textgreater}. Please also ensure the incorporation of a rich and varied lexicon in the scene descriptions. & \multirow{3}{*}{\textbf{\makecell[c]{\includegraphics[width=2.43cm]{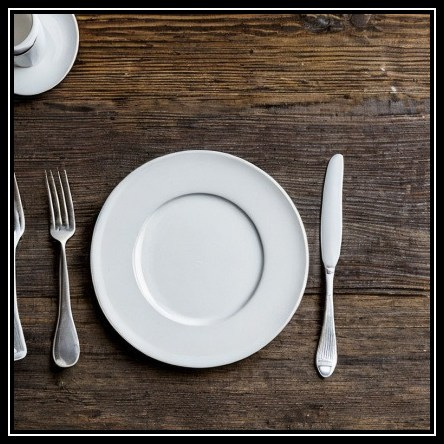}}} }
\\ \cmidrule{2-3}
& \textbf{Sample} & A steel knife, a polished fork and a pristine ceramic plate on a wooden table. & 
\\ \cmidrule{2-3}
& \textbf{\makecell[l]{Text-to-\\Image}} & We use Stable Diffusion~\cite{rombach2022high} to proceed text-to-image generation. & 
\\ 
\midrule
\multirow{3}{*}{\textbf{\makecell[l]{Object\\ Masking}} } & 
\textbf{User} & 
For object part-level description, given an input list 
\{{\textless{\texttt{Object\_1}}\textgreater}, {\textless{\texttt{Object\_2}}\textgreater}, ...\}, the output will be a list that describes the parts of objects as:
\{{{\textless{\texttt{Object\_1}}\textgreater}: [\textless\textcolor{RoyalBlue}{\texttt{Part\_1.1}}\textgreater, \textless\textcolor{RoyalBlue}{\texttt{Part\_1.2}}\textgreater, ...], {\textless{\texttt{Object\_2}}\textgreater}: [\textless\textcolor{RoyalBlue}{\texttt{Part\_2.1}}\textgreater, \textless\textcolor{RoyalBlue}{\texttt{Part\_2.2}}\textgreater, ...]}\}. & \multirow{3}{*}{\textbf{\makecell[c]{\includegraphics[width=2.43cm]{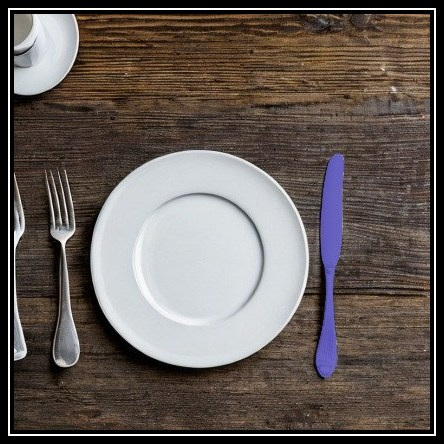}}} }
\\ \cmidrule{2-3}
&
\textbf{Sample} & \{{{{\texttt{knife}}}: [\textcolor{RoyalBlue}{\texttt{handle}}, \textcolor{RoyalBlue}{\texttt{blade}}], {{\texttt{fork}}}: [\textcolor{RoyalBlue}{\texttt{handle}}, \textcolor{RoyalBlue}{\texttt{neck}}, \textcolor{RoyalBlue}{\texttt{stem}}, \textcolor{RoyalBlue}{\texttt{tines}}]}, {{\texttt{plate}}}: [\textcolor{RoyalBlue}{\texttt{rim}}, \textcolor{RoyalBlue}{\texttt{base}}]\}& 
\\ \cmidrule{2-3}
&
\textbf{\makecell[l]{Post\\ Process}} & We use OFA~\cite{wang2022ofa} and SAM~\cite{kirillov2023segment} to locate the region describing the objects.
\\
\midrule
\multirow{3}{*}{\textbf{\makecell[l]{Part\\ Masking}} } & 
\textbf{User} & Given the object list and part lists of each scene description, you will generate for me all prompts with the following format: \{\textless{\texttt{Manipulation\_Action}}\textgreater\textless{\texttt{Object\_ID}}\textgreater\textless\textcolor{RoyalBlue}{\texttt{Part\_ID}}\textgreater\}. The part that is more suitable for human grasping is positioned at the start of the list to represent the grasping actions. & \multirow{3}{*}{\textbf{\makecell[c]{\includegraphics[width=2.43cm]{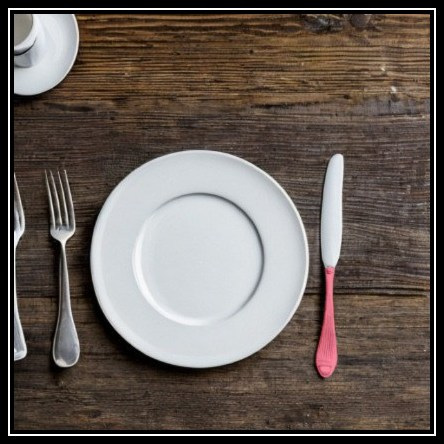}}} } 
\\ \cmidrule{2-3}
&
\textbf{Sample} & Give me the steel knife; Grasp the knife at its handle. & 
\\ \cmidrule{2-3}
&
\textbf{\makecell[l]{Post\\ Process}} & We leverage VLPart~\cite{vlpart2023} to locate the region describing the parts of objects.
\\
\midrule
\multirow{3}{*}{\textbf{\makecell[l]{Grasp\\ Generation}} } & 
\textbf{User} & Generate for me a scene description with grasp instructions following the templates. & \multirow{3}{*}{\textbf{\makecell[c]{\includegraphics[width=2.43cm]{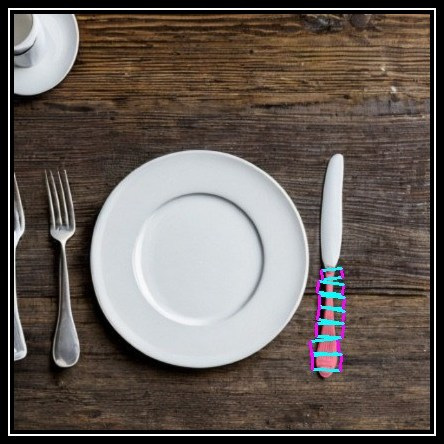}}} }
\\ \cmidrule{2-3}
&
\textbf{Sample} & 
\textbf{Scene description}: A steel knife, a polished fork and a pristine ceramic plate on a wooden table. \textbf{Object list:} \{\texttt{knife}, \texttt{fork}, \texttt{plate}\}. \textbf{Part lists:} \{{{{\texttt{knife}}}: [\textcolor{RoyalBlue}{\texttt{handle}}, \textcolor{RoyalBlue}{\texttt{blade}}], {{\texttt{fork}}}: [\textcolor{RoyalBlue}{\texttt{handle}}, \textcolor{RoyalBlue}{\texttt{neck}}, \textcolor{RoyalBlue}{\texttt{stem}}, \textcolor{RoyalBlue}{\texttt{tines}}]}, {{\texttt{plate}}}: [\textcolor{RoyalBlue}{\texttt{rim}}, \textcolor{RoyalBlue}{\texttt{base}}]\}. \textbf{Prompts:} Give me the steel knife; Grasp the knife at its handle. & 
\\ \cmidrule{2-3}
&
\textbf{\makecell[l]{Grasp\\ Labelling}} & We utilize a pretrained RAGT-3/3~\cite{cao2023nbmod} to generate grasp poses corresponding to the located region.
\\
\bottomrule
\end{tabular}
\vspace{-2ex}
\caption{\textbf{Grasp-Anything++ creation pipeline.} We utilize ChatGPT to generate scene descriptions and grasp instructions from the user input. We generate images given scene descriptions and automatically synthesize the grasp poses.
}
\label{table:prompt-example}
\vspace{-4ex}
\end{table*}

\textbf{Diffusion Models for Robotic Applications.} Diffusion models~\cite{ho2020denoising} have emerged as the new state-of-the-art method of generative tasks~\cite{yang2022diffusion}. Recently, we have witnessed growing attention for utilizing diffusion models in robotic applications~\cite{saha2023edmp}. Liu \textit{et al.}~\cite{liu2022structdiffusion} propose a diffusion model to handle the language-guided object rearrangement task. Diffusion models are also applied to other robotic tasks such as motion planning~\cite{carvalho2023motion}, and trajectory optimization~\cite{janner2022planning}. The authors in~\cite{urain2023se} present a diffusion model to determine grasp poses by minimizing a SDF loss. Overall, the diffusion models employed in previous works often combine visual and text features in a latent mechanism~\cite{blattmann2023align}, which may cause interpretability problems~\cite{yang2023pave} for robotic systems that require low-level controls~\cite{doshi2017towards}. To tackle this challenge, we propose a training objective that explicitly contributes to the denoising process. We demonstrate that our proposed strategy is theoretically supported and is more effective than the latent strategy.

\section{The Grasp-Anything++ Dataset}
We utilize large-scale foundation models to create the Grasp-Anything++. Our dataset offers open-vocabulary grasping commands and images with associated groundtruth. 
There are three key steps in establishing our dataset: \textit{i)} prompting procedure, \textit{ii)} image synthesis and grasp poses annotation, and \textit{iii)} post-processing.



\subsection{Prompting Procedure}
We first establish prompt-based procedures to generate a large-scale scene description corpus as well as grasp prompt instructions. In particular, we utilize ChatGPT to generate the prompts for two tasks: \textit{i)} Scene descriptions: Sentences capturing the scene arrangement, including the extracted object and part lists, and \textit{ii)} Grasp instructions: Prompts directing the robot to grasp specific objects or parts. 


We follow a procedure in Table~\ref{table:prompt-example} to implement ChatGPT's output templates. 
The reference target in the grasp instruction may be either an object or an object's part. 
When the reference is an object's part, that part is directly selected as the reference in the grasp instruction sentence. If the reference is an object, we determine the grasping region on the part of the object that is likely to be grasped in everyday scenarios as described in affordance theory~\cite{nguyen2023open}. 


\subsection{Image Synthesis and Grasp Annotation}
\textbf{Image Synthesis.} Given the scene description corpus, we first utilize a large-scale pretrained text-to-image model, namely, Stable Diffusion~\cite{rombach2022high} to generate images from scene descriptions. Next, we perform a series of visual grounding and image segmentation using OFA~\cite{wang2022ofa}, Segment-Anything~\cite{kirillov2023segment}, and VLPart~\cite{vlpart2023} to locate the referenced object or part to the grasp instruction.

\begin{figure*}[ht]
  \begin{subfigure}{0.31\linewidth}
    \includegraphics[width=\linewidth]{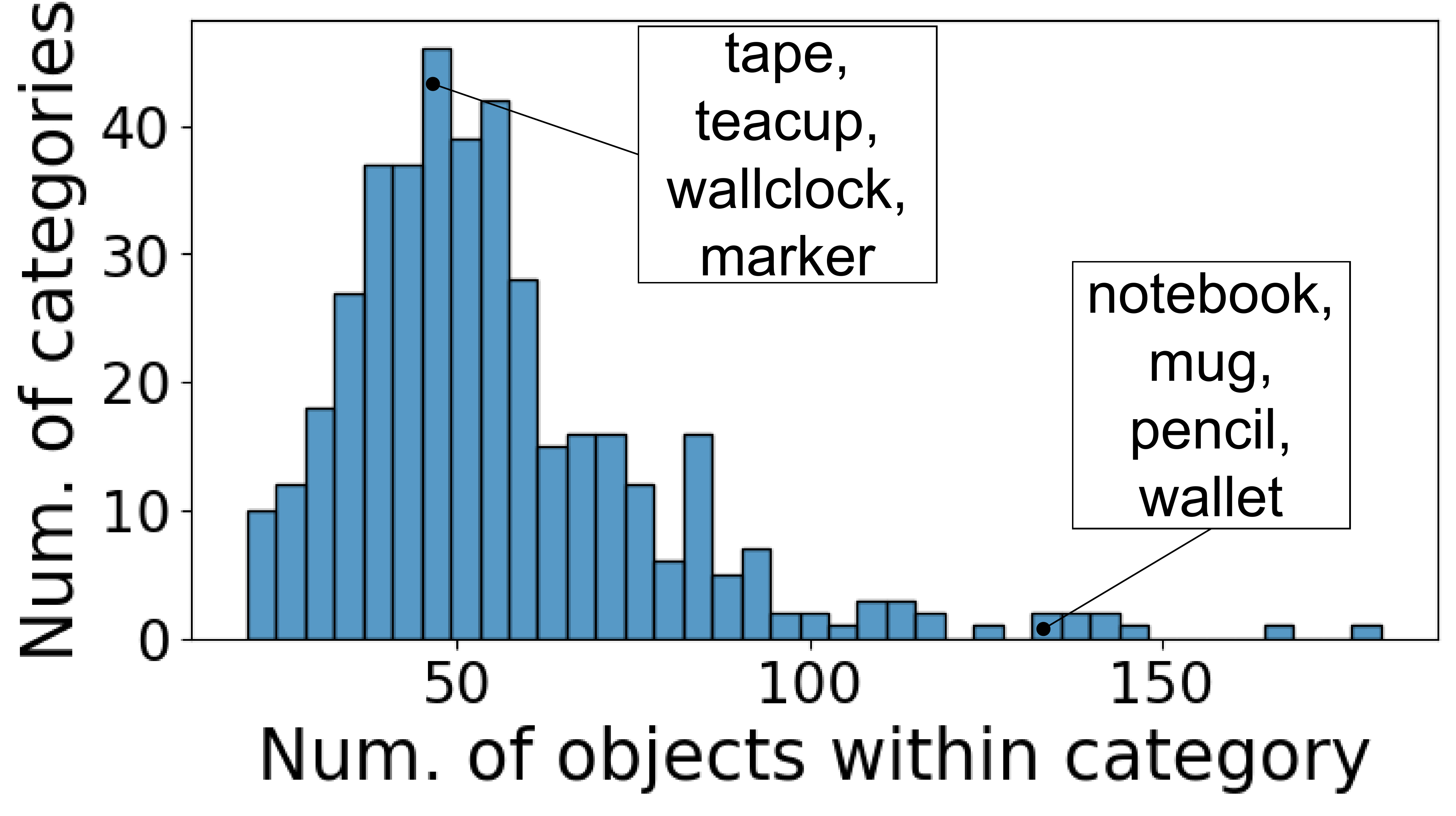}
    \caption{\label{fig:num-cats}Number of categories}
  \end{subfigure}
\begin{subfigure}{0.31\linewidth}
    \includegraphics[width=\linewidth]{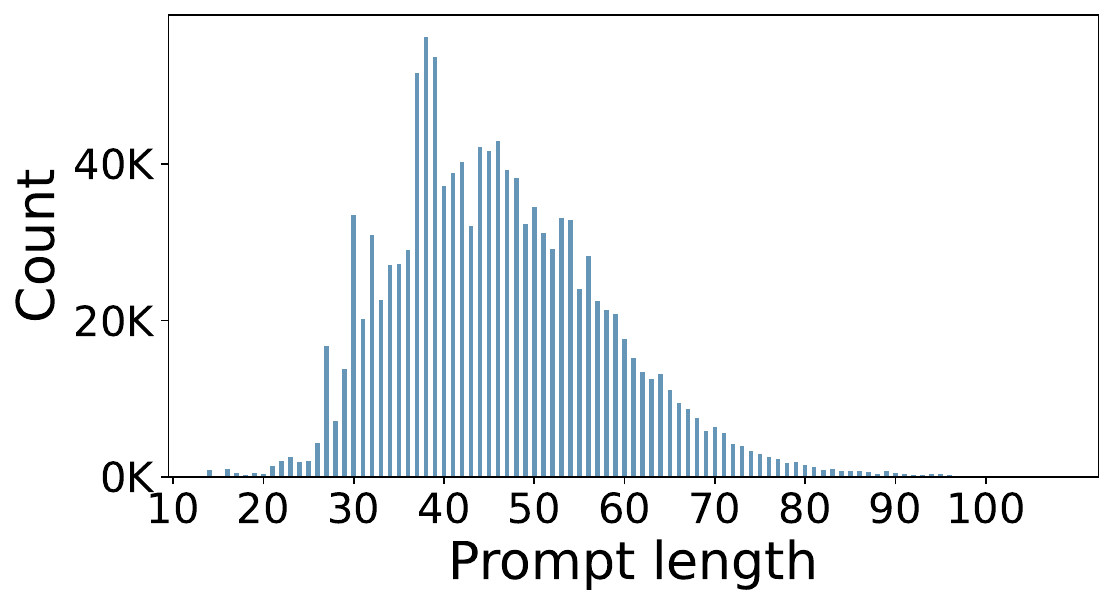}
    \caption{\label{fig:scene-length}Scene description length}
  \end{subfigure}
  \begin{subfigure}{0.38\linewidth}
    \includegraphics[width=\linewidth]{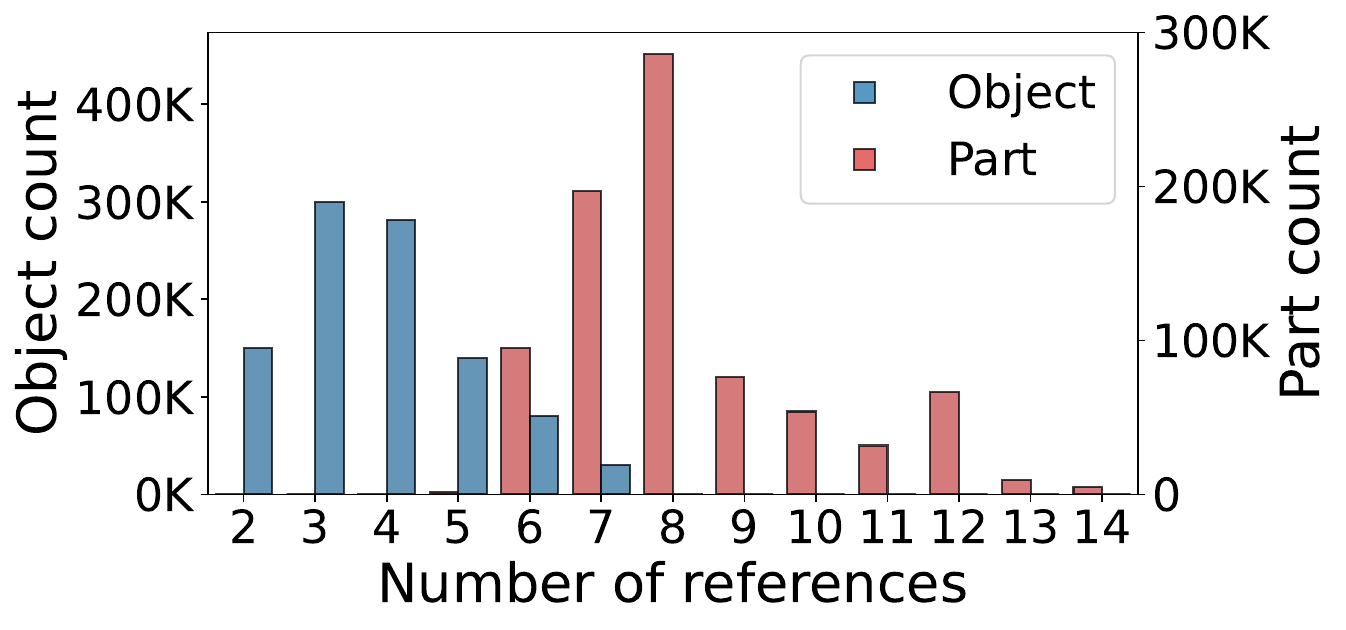}
    \caption{\label{fig:reference-count}Number of object/part references.}
  \end{subfigure}
  \vspace{-4ex}
  \caption{\textbf{Data statistics.} We analyze object categories and the use of natural language in the Grasp-Anything++ dataset.}
  \vspace{-4ex}
\end{figure*}

\textbf{Grasp Annotation.} Grasp poses are represented as 2D rectangles, consistent with prior research and practical compatibility with real-world parallel grippers~\cite{jiang2011efficient, depierre2018jacquard}. Utilizing a pretrained network~\cite{cao2023nbmod}, we annotate grasp poses based on part segmentation masks. Since potential inaccuracies in these candidate poses could occur, we follow the procedure as defined in~\cite{vuong2023grasp} to evaluate the quality of generated grasp poses to discard unreasonable grasp poses.

Specifically, grasp quality is evaluated through net torque $\mathcal{T} = \left(\tau_1+\tau_2\right) -RMg\Comma$ where resistance at contact points is $\tau_i = K\mu_sF\cos\alpha_i$. With constants such as $M$ (mass), $g$ (gravitational acceleration), $K$ (geometrical characteristics), $\mu_s$ (static friction coefficient), and $F$ (applied force), accurately determining $\mathcal{T}$ directly is challenging due to the physical difficulties in precisely measuring $M$, $K$, and $\mu_s$. Thus, we employ a surrogate measure, $\tilde{\mathcal{T}} = \dfrac{\cos\alpha_1 + \cos\alpha_2}{R}\Comma$ as an alternative. As a result in~\cite{chen1993finding}, antipodal grasps score higher on $\tilde{\mathcal{T}}$, indicating better quality. Consequently, grasps are evaluated based on $\tilde{\mathcal{T}}$, with positive values indicating positive grasps and others considered as negative.




\textbf{Post Processing.} Despite training on extensive datasets, Stable Diffusion~\cite{rombach2022high} may produce subpar content, commonly termed as hallucination\cite{huang2023survey} when generating images from the text prompts. To address this, we perform manual reviews to filter out such images, with qualitative examples in our figures. Our process includes checks at every stage to prevent duplicate or hallucinated content. However, manual inspection introduces biases, which we counter with guidelines focusing on abnormal structures or implausible gravity (Fig.~\ref{fig:hallucination}), aligns with approach in the literature~\cite{schuhmann2022laion}. 
\vspace{-2ex}
\begin{figure}[h]
    \centering
    \includegraphics[width=\linewidth]{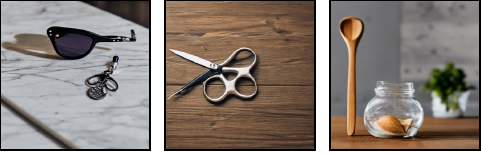}
    \vspace{-2ex}
    \caption{\textbf{Failure image generation cases.} Images generated by Stable Diffusion~\cite{rombach2022high} may exhibit hallucinatory artifacts such as sunglasses lacking a lens, scissors with an anomalous structure, and a spoon not resting properly on a table.}
    \label{fig:hallucination}
\end{figure}
\vspace{-2ex}

Additionally, ChatGPT-generated scene prompts often duplicate~\cite{liu2023summary}. To address this, we use duplication checking, filtering out identical prompts with BERTScore~\cite{zhang2019bertscore}, which assesses sentence similarity through cosine similarities of token embeddings. We remove sentences with a BERTScore above 0.85 as in prior study~\cite{wehnert2021legal}.

\subsection{Data Statistics}\label{subsec: data-stat}

\textbf{Number of Categories.} To evaluate the object category diversity, we apply a methodology akin to that in~\cite{deitke2023objaverse}. Utilizing 300 categories from LVIS dataset~\cite{gupta2019lvis}, we employ a pretrained model~\cite{zhong2022regionclip} to identify 300 candidate objects from our dataset for each category. We then curate a subset comprising 90,000 objects, refining it by excluding items that do not align semantically with their designated categories. A category is considered significant if it has more than 40 objects. Fig.~\ref{fig:num-cats} shows the results. Overall, our dataset spans over $236$ categories from LVIS dataset, indicating a notable degree of object diversity in our dataset.

\textbf{Scene Descriptions.} Fig.~\ref{fig:scene-length} shows the distribution of scene descriptions based on sentence length. The analysis reveals a wide range of sentence lengths, spanning from 10 words to 100 words per sentence. On average, each scene description consists of approximately 54 words, indicative of detailed and descriptive sentences. These scene descriptions correspond to sets of grasp instructions. Fig.~\ref{fig:reference-count} further shows the objects and object parts in scene descriptions.




\textbf{Diversity Analysis.} We assess the diversity of occlusion and lighting conditions in the dataset. Regarding occlusion, we use a pretrained YOLOv5 model to identify objects within images. The results indicate that $93.8\%$ of images have a substantial overlap of five or more bounding boxes, which suggests a diverse range of occlusion within the Grasp-Anything++ dataset. Regarding lighting conditions, we convert images to YCbCr to analyze Y channel (luminance) and find that GraspL1M has the most diverse lighting conditions, identifying by the lowest Gini coefficient (a metric to measure the inequality of a distribution) of $0.26$, compared to VMRD~\cite{zhang2019roi} ($0.31$), OCID-grasp~\cite{ainetter2021end} ($0.32$), Cornell~\cite{jiang2011efficient} ($0.62$), Jacquard~\cite{depierre2018jacquard} ($0.91$).
\section{Language-driven Grasp Detection}
\textbf{Motivation.} The use of diffusion model for language-driven grasp detection is motivated by its efficiency in various generative tasks~\cite{ho2020denoising, yang2022diffusion,le2023controllable,liu2022structdiffusion, croitoru2023diffusion}. Conditional generation, such as our language-driven grasp detection task, aligns seamlessly with diffusion models' capabilities~\cite{ho2022video}. 
Moreover, language-driven grasp detection represents a fine-grained problem in which the outputs strongly depend on the text input~\cite{asif2018graspnet}. For example, ``\textit{grasp the steak knife}" and ``\textit{grasp the kraft knife}" refer to two different objects on the image.
To this end, we propose using \textit{contrastive loss with diffusion model} to tackle this task, as contrastive learning is a popular solution for fine-grained tasks~\cite{bukchin2021fine,do2022fine, yang2023zero}.

\subsection{Constrastive Loss for Diffusion Model}
We represent the target grasp pose as $\mathbf{x}_0$ in the diffusion model. The objective of our diffusion process of language-driven grasp detection involves denoising from a noisy state $\mathbf{x}_T$ to the original grasp pose $\mathbf{x}_0$, conditioned on the input image and grasp instruction represented by $y$.

In a diffusion process~\cite{ho2020denoising}, assume that $q(\mathbf{x}_{1:T}\vert\mathbf{x}_0)$ is the forward process and we parameterize the reverse process by $p_{\theta}(\mathbf{x}_{0:T})$. The conditional diffusion process~\cite{dhariwal2021diffusion} assumes $\hat{q}$ is the forward process but with the inclusion of a condition $y$. The goal of the reverse process is to optimize the variational bound on negative log likelihood~\cite{ho2020denoising} 

\begin{equation}
\label{eq: log-likelihood-1}
    \mathcal{L} = \mathbb{E}\left[-\log p_{\theta}(\mathbf{x}_T) - \sum_{t\geq 1}\log\frac{p_{\theta}(\mathbf{x}_{t-1}\vert\mathbf{x}_t)}{q(\mathbf{x}_t\vert\mathbf{x}_{t-1})}\right]\FullStop
\end{equation}

We prove in the Appendix that

\begin{equation}
\label{eq: log-likelihood-4}
\begin{gathered}
\mathcal{L}=\mathbb{E}\bigg[\underbrace{C-\log p_\theta(\mathbf{x}_T)}_\textrm{Constant} + \underbrace{\log\hat{q}(\mathbf{x}_0\vert\mathbf{x}_T, y)}_\textrm{Contrastive} + \\ \underbrace{\sum_{t>1}D_{\text{KL}}(\hat{q}(\mathbf{x}_{t-1}\vert\mathbf{x}_t, \mathbf{x}_0, y)\|p_\theta(\mathbf{x}_{t-1}\vert\mathbf{x}_t))-\log p_\theta(\mathbf{x}_0\vert\mathbf{x}_1, y)}_\textrm{Denoising score}\bigg]\FullStop
\end{gathered}
\end{equation}

The terms $D_{\text{KL}}(\hat{q}(\mathbf{x}_{t-1}\vert\mathbf{x}_t, \mathbf{x}_0, y)\|p_\theta(\mathbf{x}_{t-1}\vert\mathbf{x}_t))$ and $\log p_\theta(\mathbf{x}_0\vert\mathbf{x}_1, y)$ in Equation~\ref{eq: log-likelihood-4} are similar to the concept of denoising score used in~\cite{ho2020denoising}. Thus, we can represent the quantity $\sum_{t>1}D_{\text{KL}}(\hat{q}(\mathbf{x}_{t-1}\vert\mathbf{x}_t, \mathbf{x}_0, y)\|p_\theta(\mathbf{x}_{t-1}\vert\mathbf{x}_t))-\log p_\theta(\mathbf{x}_0\vert\mathbf{x}_1, y)$ by the loss utilized in~\cite{tevet2022human, tseng2023edge}

\begin{equation}
    \label{eq: loss-denoising}
    \mathcal{L}_{\rm{diffusion}} = \mathbb{E}_{\mathbf{x}_0\sim q(\mathbf{x}_0\vert y), t\sim[1, T]}\left[\mathbf{x}_0-f(\mathbf{x}_{t+1}, t+1, \tilde{\mathbf{x}}_0)\right]^2\FullStop
\end{equation}

Since $q(\mathbf{x}_T\vert\cdot)$ is equivalent to an isotropic Gaussian distribution as $T\rightarrow+\infty$, the estimation quantity $\log p_\theta(\mathbf{x}_T)$ converges to a constant when $\theta\rightarrow\theta^\ast$. Therefore, we can ignore the first term of Equation~\ref{eq: log-likelihood-4}.

Finally, the term ${\hat{q}(\mathbf{x}_0\vert\mathbf{x}_T, y)}$ of Equation~\ref{eq: log-likelihood-4} provides more information about the relation between $\mathbf{x}_T$ and $\mathbf{x}_0$. As this term is intractable~\cite{sohl2015deep}, we parameterize ${\hat{q}(\mathbf{x}_0\vert\mathbf{x}_T, y)}$ as $p_{\psi, y}(\mathbf{x}_0, \mathbf{x}_T)$. This estimation resembles the noise-contrastive estimation~\cite{gutmann2010noise}, where the $\mathbf{x}_T$, $\mathbf{x}_0$ can be considered as a pair of contrastive estimation and $\psi$ can be estimated by a contrastive loss. 

In~\cite{vuong2023language, dhariwal2021diffusion, tevet2022human}, the authors indicate that predicting $\mathbf{x}_0$ is often infeasible but predicting an estimation $\tilde{\mathbf{x}}_0$ is tractable and can be used as a `pseudo' estimation of $\mathbf{x}_0$. We denote $\tilde{\mathbf{x}}_0$ as an estimation of $\mathbf{x}_0$. The loss term ${\hat{q}(\mathbf{x}_0\vert\mathbf{x}_T, y)}$ can be approximated by using the following contrastive loss

\begin{equation}
\label{eq: contrastive-loss}
    \mathcal{L}_{\rm{contrastive}} = \max \left(0,\left\Vert \frac{\sqrt{\overline{\alpha_T}}\tilde{\mathbf{x}}_0-\mathbf{x}_T}{\sqrt{1-\overline{\alpha_T}}}\right\Vert_2^2-M\right),
\end{equation}
where $M$ is the number of dimension of $\mathbf{x}_0$, and $\alpha_t$ is the variance schedule at timestep $t$ ($t=\overline{1;T}$). 

\begin{proposition}
\label{prop: prop_1} 
Suppose that $\tilde{\mathbf{x}}_0,\,\mathbf{x}_0$ and $\epsilon$ are independent, and that
\[
\left\Vert \frac{\sqrt{\overline{\alpha_T}}\tilde{\mathbf{x}}_0-\mathbf{x}_T}{\sqrt{1-\overline{\alpha_T}}} \right\Vert_2^2 \geq M\FullStop
\]
Then there exists $C>0$ such that: for arbitrary $\delta>0$, if $\mathcal{L}_{\rm{contrastive}}<\delta$, then 
\[
\mathbb{E}\left[\Vert \tilde{\mathbf{x}}_0-\mathbf{x}_0 \Vert_2^2\right]<C\delta\FullStop
\]
\end{proposition}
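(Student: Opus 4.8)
The plan is to rewrite the squared norm inside $\mathcal{L}_{\rm{contrastive}}$ in terms of the estimation error $\tilde{\mathbf{x}}_0-\mathbf{x}_0$ and the forward noise $\epsilon$, compute its expectation in closed form using independence, and then simply invert the resulting identity to read off $C$.

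First I would use the forward marginal of the diffusion process, $\mathbf{x}_T=\sqrt{\overline{\alpha_T}}\,\mathbf{x}_0+\sqrt{1-\overline{\alpha_T}}\,\epsilon$ with $\epsilon$ a standard Gaussian on $\mathbb{R}^M$, to substitute for $\mathbf{x}_T$. This gives
\[
\frac{\sqrt{\overline{\alpha_T}}\,\tilde{\mathbf{x}}_0-\mathbf{x}_T}{\sqrt{1-\overline{\alpha_T}}}=\sqrt{\tfrac{\overline{\alpha_T}}{1-\overline{\alpha_T}}}\,(\tilde{\mathbf{x}}_0-\mathbf{x}_0)-\epsilon .
\]
Writing $\lambda:=\overline{\alpha_T}/(1-\overline{\alpha_T})>0$ and expanding the square, the argument of the $\max$ becomes
\[
\lambda\,\|\tilde{\mathbf{x}}_0-\mathbf{x}_0\|_2^2-2\sqrt{\lambda}\,\langle \tilde{\mathbf{x}}_0-\mathbf{x}_0,\ \epsilon\rangle+\|\epsilon\|_2^2-M .
\]
Under the standing hypothesis that $\big\|(\sqrt{\overline{\alpha_T}}\tilde{\mathbf{x}}_0-\mathbf{x}_T)/\sqrt{1-\overline{\alpha_T}}\big\|_2^2\ge M$ holds (almost surely), the $\max$ is realized by its first argument, so $\mathcal{L}_{\rm{contrastive}}$ equals exactly this expression. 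Taking expectations and using that $\epsilon$ is independent of $(\tilde{\mathbf{x}}_0,\mathbf{x}_0)$ with $\mathbb{E}[\epsilon]=0$, the cross term has zero mean, while $\mathbb{E}\big[\|\epsilon\|_2^2\big]=M$ since $\epsilon\sim\mathcal{N}(0,\mathbf{I}_M)$; the constant $M$ therefore cancels, leaving $\mathbb{E}[\mathcal{L}_{\rm{contrastive}}]=\lambda\,\mathbb{E}\big[\|\tilde{\mathbf{x}}_0-\mathbf{x}_0\|_2^2\big]$. Choosing $C:=1/\lambda=(1-\overline{\alpha_T})/\overline{\alpha_T}$, which is strictly positive and depends only on the noise schedule (not on $\delta$), the hypothesis $\mathcal{L}_{\rm{contrastive}}<\delta$ immediately yields $\mathbb{E}\big[\|\tilde{\mathbf{x}}_0-\mathbf{x}_0\|_2^2\big]<C\delta$.

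The computation itself is routine; the points that need care are (i) making precise that the inequality on $\mathcal{L}_{\rm{contrastive}}$ is read in expectation and that the lower-bound hypothesis holds almost surely, since this is exactly what licenses discarding the $\max$; (ii) invoking that $\epsilon$ has mean zero and identity covariance, which is what makes $\mathbb{E}\|\epsilon\|_2^2=M$ and explains the role of the constant $M$ in the definition of the loss, namely to de-bias the surrogate so that a perfect estimator $\tilde{\mathbf{x}}_0=\mathbf{x}_0$ drives its expectation to zero; and (iii) noting that $C\to\infty$ as $\overline{\alpha_T}\to 0$, i.e.\ the guarantee is vacuous in the limit of a fully noised terminal state — this degradation of the constant is the only genuinely quantitative subtlety, though it does not affect the stated existence claim for a fixed schedule.
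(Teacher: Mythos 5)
Your proposal is correct and follows essentially the same route as the paper's proof: substitute the forward marginal $\mathbf{x}_T=\sqrt{\overline{\alpha}_T}\mathbf{x}_0+\sqrt{1-\overline{\alpha}_T}\epsilon$, expand the square, kill the cross term by independence and $\mathbb{E}[\epsilon]=0$, cancel $M$ against $\mathbb{E}\|\epsilon\|_2^2$, and take $C=(1-\overline{\alpha}_T)/\overline{\alpha}_T=\beta^{-2}$, exactly the constant the paper obtains. Your added remarks (reading the hypothesis in expectation/almost surely, and the degeneration of $C$ as $\overline{\alpha}_T\to 0$) are points the paper passes over silently but do not change the argument.
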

\begin{proof}
See Supplementary Material.
\end{proof}

\begin{remark}

\begin{itemize}[leftmargin=*]
	\item[] Proposition~\ref{prop: prop_1} suggests that if the contrastive loss $\mathcal{L}_{\rm{contrastive}}$ tends to zero, then the prediction $\tilde{\mathbf{x}}_0$ will approach the ground truth $\mathbf{x}_0$.
 
\end{itemize}
\end{remark}

\begin{remark}
\begin{itemize}[leftmargin=*]
	\item[] The condition $\left\Vert \frac{\sqrt{\overline{\alpha_T}}\tilde{\mathbf{x}}_0-\mathbf{x}_T}{\sqrt{1-\overline{\alpha_T}}} \right\Vert_2^2 \geq M$ is suitable for our language-driven grasp detection task as $\tilde{\mathbf{x}}_0$ and $\mathbf{x}_T$ are two contrastive quantities, therefore, we can assume there is a minimum distance between $\tilde{\mathbf{x}}_0$ and $\mathbf{x}_T$. In addition, in the proof of Proposition~\ref{prop: prop_1}, we see that $\mathbb{E} \left[ \left\Vert \frac{\sqrt{\overline{\alpha_T}}\tilde{\mathbf{x}}_0-\mathbf{x}_T}{\sqrt{1-\overline{\alpha_T}}} \right\Vert_2^2 - M \right] = \beta^2 \mathbb{E} \left[ \Vert \tilde{\mathbf{x}}_0  - \mathbf{x}_0 \Vert_2^2 \right]$, which is always nonnegative. Therefore, it is both theoretically and experimentally reasonable to add this assumption. 
 \end{itemize}
\end{remark}

\begin{figure}[!ht]
    \centering
    \includegraphics[width=1.02\linewidth , trim={0.8cm 0.3cm 1cm 0.3cm}]{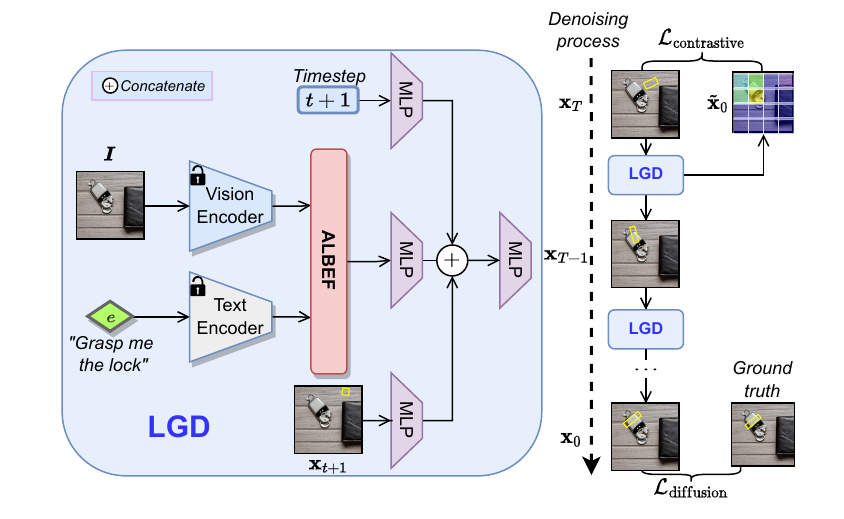}
    \caption{\textbf{Language-drive Grasp Detection (LGD) network.} We present the network architecture (left) and the proposed training objectives of the denoising process (right).}
    \label{fig:method-overview}
    \vspace{-2ex}
\end{figure}

\subsection{Language-driven Grasp Detection Network}




\begin{figure*}
    \includegraphics[width=\linewidth, trim={0 0 0 0.2cm}, clip]{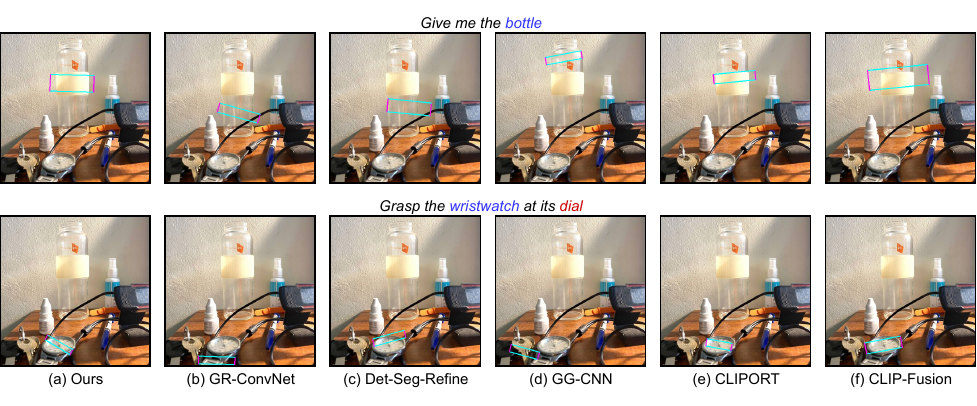}
    \vspace{-6ex}\caption{\label{fig:language-driven-comparison} \textbf{Language-driven grasp detection results visualization.}}
    \vspace{-4ex}
\end{figure*}

\textbf{Network.} Our network operates on two conditions: an image denoted as $\textit{\textbf{I}}$ and a corresponding text prompt represented as $e$. To process these conditions, we employ a vision encoder to extract visual features from $\textit{\textbf{I}}$ and a text encoder to derive textual embeddings from $e$. The resulting feature vectors, denoted as $\textit{\textbf{I}}^\prime$ and $e^\prime$, are subsequently subjected to a fusion module, ALBEF~\cite{li2021align}. We leverage the attention mask generated by the ALBEF fusion module as the estimation $\tilde{\mathbf{x}}_0$ of $\mathbf{x}_0$. Next, we aggregate three elements: the estimation region $\tilde{\mathbf{x}}_0$, the grasp pose at the current timestep $\mathbf{x}_{t+1}$, and the timestep $t+1$. These inputs are combined using MLP layers, similar to the approach outlined in~\cite{tevet2022human}. Specifically, the output operation can be expressed as: $\mathbf{x}_t = f(\mathbf{x}_{t+1}, t+1, \tilde{\mathbf{x}}_0)$, where function $f$ encompasses a composition of multiple MLP layers. Additional specifics regarding these universal MLP layers are provided in the Supplementary Material.

\textbf{Training Objective.} In our context, conditioned grasp detection models the distribution $p(\mathbf{x}_0|y)$ as the reversed diffusion process of gradually cleaning $\mathbf{x}_{t+1}$. Instead of predicting $\mathbf{x}_{t}$ as formulated by~\cite{ho2020denoising},
we follow Ramesh \textit{et al.}~\cite{ramesh2022hierarchical} and predict the signal itself, i.e., $\mathbf{x}_t = f(\mathbf{x}_{t+1}, t+1, \tilde{\mathbf{x}}_0)$ with the simple
objective~\cite{ho2020denoising}. To this end, we utilize the contrastive loss as in Equation~\ref{eq: contrastive-loss} to explicitly improve the learning objective of the denoising process:

\begin{equation}
    \label{eq: training-objective}
 \mathcal{L}_{\rm{total}} = \mathcal{L}_{\rm{contrastive}} + \mathcal{L}_{\rm{diffusion}}\FullStop  
\end{equation}


\graphicspath{./images}

\section{Experiments}
We conduct experiments to evaluate our proposed method and Grasp-Anything++ dataset using both the vision-based metrics and real robot experiments. We then demonstrate zero-shot grasp results and discuss the challenges and open questions for future works.


\subsection{Language-driven Grasp Detection Results}
\label{subsec:language-driven-grasp-detection}

\textbf{Baselines.} We compare our language-driven grasp detection method (LGD) with the linguistically supported versions of GR-CNN~\cite{kumra2020antipodal}, Det-Seg-Refine~\cite{ainetter2021end}, GG-CNN~\cite{morrison2018closing}, CLIPORT~\cite{shridhar2022cliport} and CLIP-Fusion~\cite{xu2023joint}. In all cases, we employ a pretrained CLIP~\cite{radford2021learning} or BERT~\cite{devlin2018bert} as the text embedding. The implementation details of all baselines can be found in our Supplementary Material.

\textbf{Setup.} 
To assess the generalization of all methods trained on Grasp-Anything++, we utilize the concept of base and new labels~\cite{zhou2022conditional} in zero-shot learning. We categorize LVIS labels from Section~\ref{subsec: data-stat} to form labels for our experiment. In particular, we select 70\% of these labels by frequency for `Base' and assign the remaining 30\% to `New'. We also use the harmonic mean (`H') to measure the overall success rates~\cite{zhou2022conditional}. Our primary evaluation metric is the success rate, defined similarly to~\cite{kumra2020antipodal}, necessitating an IoU score of the predicted grasp exceeding $25\%$ with the ground truth grasp and an offset angle less than $30^\circ$.


\begin{table}[t]
\centering
\renewcommand
\tabcolsep{4.5pt}
\hspace{1ex}
\begin{tabular}{@{}rccc@{}}
\toprule
Baseline & Seen & Unseen & H \cr 
\midrule
GR-ConvNet~\cite{kumra2020antipodal} + CLIP~\cite{radford2021learning}  & 0.37 & 0.18 & 0.24\\
Det-Seg-Refine~\cite{ainetter2021end} + CLIP~\cite{radford2021learning} & 0.30 & 0.15 & 0.20\\
GG-CNN~\cite{morrison2018closing} + CLIP~\cite{radford2021learning} & 0.12 & 0.08 & 0.10
\\
CLIPORT~\cite{shridhar2022cliport} & 0.36 & 0.26 & 0.29\\
CLIP-Fusion~\cite{xu2023joint} & 0.40 & 0.29 & 0.33\\ \midrule
LGD (ours) + BERT~\cite{devlin2018bert} & 0.44 & 0.38 & 0.41\\
LGD (ours) + CLIP~\cite{radford2021learning} & \textbf{0.48} & \textbf{0.42} & \textbf{0.45}\\

\bottomrule
\end{tabular}
\vspace{-1ex}
\caption{\textbf{Language-driven grasp detection results.} }
\label{table: language-driven grasp detection results}
\vspace{-3ex}
\end{table}

\textbf{Main Results.} Table~\ref{table: language-driven grasp detection results} shows the results of language-driven grasp detection on the Grasp-Anything++ dataset. The findings indicate a notable performance advantage of our LGD over other baseline approaches, with LGD outperforming the subsequent best-performing baselines (CLIP-Fusion) by margins of $0.14$ on Grasp-Anything++ dataset.

\begin{table}[h]
\centering
\renewcommand
\tabcolsep{4.5pt}
\hspace{1ex}
\begin{tabular}{@{}rccc@{}}
\toprule
Baseline & Seen & Unseen & H \cr 
\midrule
LGD w/o predicting $\tilde{\mathbf{x}}_0$ & 0.15 & 0.08 & 0.10\\
LGD w/o contrastive loss & 0.45 & 0.40 & 0.42 \\ 
LGD w contrastive loss & \textbf{0.48} & \textbf{0.42} & \textbf{0.45}\\ 
\bottomrule
\end{tabular}
\vspace{-1ex}
\caption{\textbf{Contrastive loss analysis.}}
\label{table:ablation-study}
\vspace{-2ex}
\end{table}

\begin{figure}[h]
    \centering
    \includegraphics[width=\linewidth]{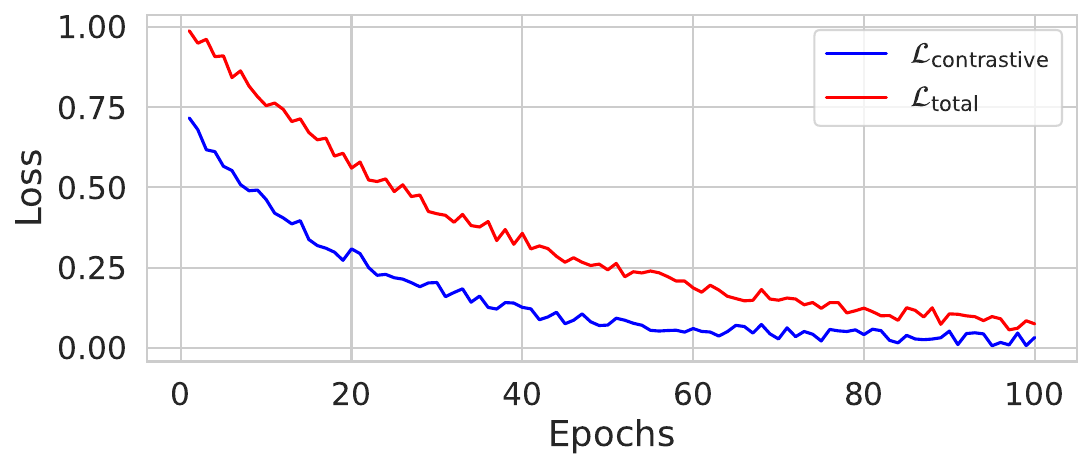}
    \vspace{-4ex}
    \caption{\textbf{Loss visualization.}}
    \label{fig:loss-visualization}
    \vspace{-2ex}
\end{figure}

\begin{figure}[!ht]
    \centering
    \includegraphics[width=\linewidth]{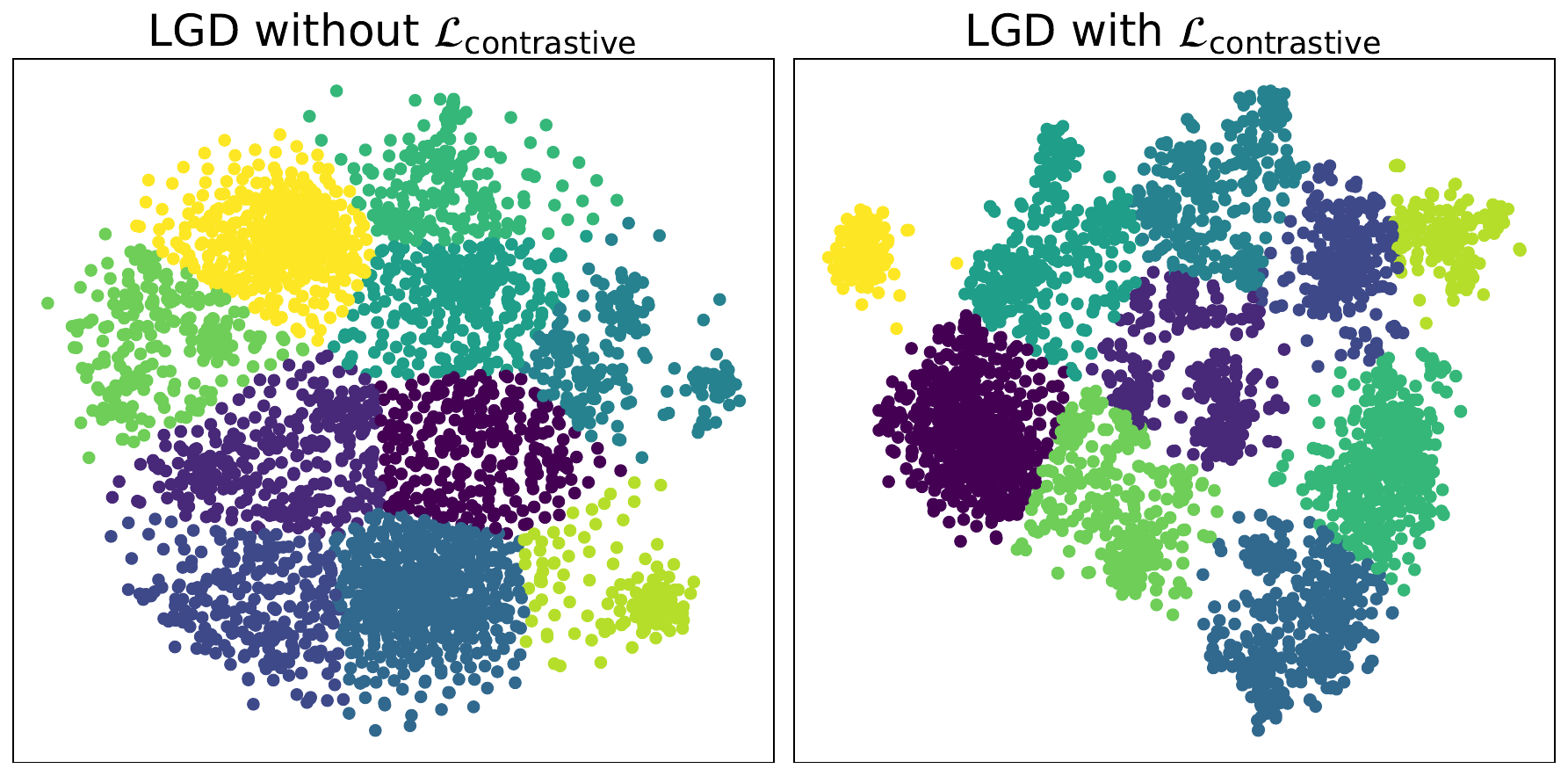}
    \vspace{-3ex}
    \caption{\textbf{t-SNE visualization.} We apply t-SNE to cluster the vision-and-language features with and without contrastive loss.}
    \label{fig: t-sne}
    \vspace{-3ex}
\end{figure}

\textbf{Contrastive Loss Analysis.} Table~\ref{table:ablation-study} presents the performance of LGD under varied configurations. The outcomes emphasize the substantial influence of the training objective (contrastive loss) and the importance of language instructions in enhancing LGD performance on both seen and unseen classes in the Grasp-Anything++ dataset. 

Fig.~\ref{fig:loss-visualization} shows the contrastive loss $\mathcal{L}_\text{contrastive}$ approaching towards $0$ during training, indicating the grasp pose estimation $\tilde{\mathbf{x}}_0$ aligns with the ground truth $\mathbf{x}_0$, as anticipated by Proposition~\ref{prop: prop_1}. The subsequent attention maps visualization in Fig.~\ref{fig:guiding-region} shows the attention region is meaningful and improves the results when employing our proposed contrastive loss compared to its absence. Moreover, we employ t-SNE for vision-and-language embedding visualization, as in~\cite{miao2023fedseg}, by processing $2,000$ samples from the Grasp-Anything++ dataset through the ALBEF module. The outcomes reveal that our contrastive loss facilitates better object classification, as evidenced in Fig.\ref{fig: t-sne} by clearer segregation of pixel embeddings across various semantic classes, underscoring contrastive loss's role in refining embeddings' differentiation for improved class distinctions.

\textbf{Qualitative Results.} Fig.~\ref{fig:language-driven-comparison} presents qualitative results of the language-driven grasp detection task, suggesting that our LGD method generates more semantically plausible than other baselines. Despite satisfactory performance, LGD occasionally predicts incorrect results, with a detailed analysis of these cases available in our Appendix.

\begin{figure}[ht]
    \centering
    \includegraphics[width=1.0\linewidth, trim={1cm 0.3cm 1cm 0}, clip]{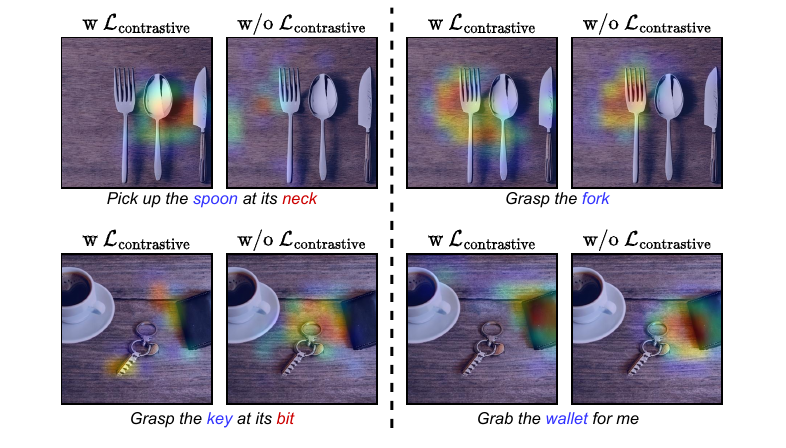}
    \caption{\textbf{Attention map visualization.} We compare the attention map when utilizing our proposed contrastive loss and when not.}
    \label{fig:guiding-region}
    \vspace{-2ex}
\end{figure}

\begin{table}[h]
    \centering
    \renewcommand
\tabcolsep{4pt}
\hspace{1ex}
    \begin{tabular}{@{}rcc@{}}
\toprule
Baseline & Single &  Cluttered\cr 
\midrule
GR-ConvNet~\cite{kumra2020antipodal} + CLIP~\cite{radford2021learning}  &0.33  & 0.30\\ 
Det-Seg-Refine~\cite{ainetter2021end} + CLIP~\cite{radford2021learning} &0.30  & 0.23\\ 
GG-CNN~\cite{morrison2018closing} + CLIP~\cite{radford2021learning} &0.10  & 0.07 \\
CLIPORT~\cite{shridhar2022cliport} &0.27 & 0.30 \\
CLIP-Fusion~\cite{xu2023joint} & 0.40 & 0.40 \\
LGD (ours) &  \textbf{0.43} & \textbf{0.42} \\
\bottomrule
\end{tabular}
\caption{\label{table: real-robot-language-driven} \textbf{Robotic language-driven grasp detection results.}}
\vspace{-2ex}
\end{table}

\textbf{Robotic Validation.} 
We provide quantitative results by integrating our language-driven grasp detection pipeline for a robotic grasping application with a \kuka robot. Using the RealSense D435i camera, the grasp pose inferred from approaches in Table~\ref{table: real-robot-language-driven} is transformed into the 6DoF grasp pose, similar to~\cite{kumra2020antipodal}. The optimization-based trajectory planner in~\cite{vu2023machine,beck2023singularity} is employed to execute the grasps. Experiments are conducted for two scenarios, i.e., the single object scenario and the cluttered scene scenario, of a set of $20$ real-world daily objects. In each scenario, we run $30$ experiments using baselines listed in Table~\ref{table: real-robot-language-driven} and a predefined grasping prompt corpus. The results exhibit that our LGD outperforms other baselines. Furthermore, although LGD is trained on our Grasp-Anything++ which is a solely synthesis dataset created by foundation models, it still shows reasonable results on real-world objects.


\begin{table*}[ht]
\centering
\renewcommand
\tabcolsep{4.5pt}
\hspace{1ex}
\vskip 0.1 in
\resizebox{\linewidth}{!}{
\begin{tabular}{@{}rccccccccccccccccc@{}}
\toprule
  & \multicolumn{3}{c}{\textbf{Grasp-Anything++ (ours)}} & \multicolumn{3}{c}{\textbf{Jacquard~\cite{depierre2018jacquard}}} & \multicolumn{3}{c}{\textbf{Cornell~\cite{jiang2011efficient}}} & \multicolumn{3}{c}{\textbf{VMRD~\cite{zhang2019roi}}} & \multicolumn{3}{c}{\textbf{OCID-grasp~\cite{ainetter2021end}}}\\
\cmidrule(lr){2-4}\cmidrule(lr){5-7}\cmidrule(lr){8-10}\cmidrule(lr){11-13}\cmidrule(lr){14-16}
Baseline & Base &  New & H & Base &  New & H & Base &  New & H & Base &  New & H & Base &  New & H\cr 
\midrule
GR-ConvNet~\cite{kumra2020antipodal} & 0.71 & 0.59 & 0.64 & 0.88 & 0.66 & 0.75 & 0.98 & 0.74 & 0.84 & 0.77 & 0.64 & 0.70 & 0.86 & 0.67 & 0.75 \\

Det-Seg-Refine~\cite{ainetter2021end} & 0.62 & 0.57 & 0.59 & 0.86 &  0.60 & 0.71 & \textbf{0.99} &  \textbf{0.76} & \textbf{0.86} & 0.75 & 0.60 & 0.66 & 0.80 & 0.62 & 0.70\\

GG-CNN~\cite{morrison2018closing} & 0.68 & 0.57 & 0.62 & 0.78 & 0.56 & 0.65 & 0.96 & 0.75 & 0.84 & 0.69 & 0.53 & 0.59 & 0.71 & 0.63 & 0.67 \\

LGD (no text) (ours) & \textbf{0.74} & \textbf{0.63} & \textbf{0.68} & \textbf{0.89} & \textbf{0.69} & \textbf{0.77} & 0.97 & \textbf{0.76} & 0.85 & \textbf{0.79} & \textbf{0.66} & \textbf{0.72} & \textbf{0.88} & \textbf{0.68} & \textbf{0.76} \\

\bottomrule
\end{tabular}
}
\vspace{-1ex}
\caption{\label{table: base-to-new-grasp-detection} \textbf{Base-to-new zero-shot grasp detection results.}}
\vspace{-3ex}
\end{table*}

\subsection{Zero-shot Grasp Detection}
Our proposed Grasp-Anything++ is a large-scale dataset. Apart from the language-driven grasp detection task, we believe it can be used for other purposes. In this experiment, we seek to answer the question: Can Grasp-Anything++ be useful in the traditional grasp detection task without text? Consequently, we verify our Grasp-Anything++ and LGD (no text) with other existing datasets and grasping methods.

\textbf{Setup.} We setup an LGD (no text) version, and other state-of-the-art grasp detection methods GR-ConvNet~\cite{kumra2020antipodal}, Det-Seg-Refine~\cite{ainetter2021end}, GG-CNN~\cite{morrison2018closing}. We use five datasets: our Grasp-Anything++, Jacquard~\cite{depierre2018jacquard}, Cornell~\cite{jiang2011efficient}, VMRD~\cite{zhang2019roi}, and OCID-grasp~\cite{ainetter2021end} in this experiment. 


\textbf{Zero-shot Results.} Table~\ref{table: base-to-new-grasp-detection} summarizes the base-to-new grasp detection results on five datasets. Overall, the performance of LGD even without the language branch is better than other baselines across all datasets. Furthermore, this table also shows that our Grasp-Anything++ dataset is more challenging to train as the detection results are lower than related datasets using the same approaches due to the greater coverage of unseen objects in the testing phase.


\begin{table}[htp]
\vspace{-1.5ex}
\centering
\renewcommand
\tabcolsep{4.5pt}
\hspace{1ex}
\vskip 0.1 in
\resizebox{\linewidth}{!}{
\begin{tabular}{@{}rccccc@{}}
\toprule
 \diagbox{Train}{Test} & \textbf{Jacquard} & \textbf{Cornell} & \textbf{VMRD} & \textbf{OCID-grasp} & \textbf{Grasp-Anything++}\\
\midrule Jacquard~\cite{depierre2018jacquard} & \underline{0.87} & 0.51 & 0.13 & 0.21 & \textbf{0.17} \\

Cornell~\cite{jiang2011efficient} & 0.07 & \underline{0.98} & 0.20 & 0.12 & 0.13 \\

VMRD~\cite{zhang2019roi} & 0.06 & 0.21 & \underline{0.79} & 0.11 & 0.10 \\

OCID-grasp~\cite{ainetter2021end} & 0.09 & 0.12 & 0.20 & \underline{0.74} & 0.11 \\ \midrule

Grasp-Anything++ (ours) & \textbf{0.41} & \textbf{0.63} & \textbf{0.30} & \textbf{0.39} & \underline{0.65} \\
\bottomrule
\end{tabular}}
\vspace{0ex}
\caption{\label{table: cross-dataset} Cross-dataset grasp detection results.}
\vspace{-2ex}
\end{table}

\textbf{Cross-dataset Evaluation.} To further verify the usefulness of our Grasp-Anything++ dataset, we conduct the cross-dataset validation in Table~\ref{table: cross-dataset}. We use the GR-ConvNet~\cite{kumra2020antipodal} to reuse its results on existing grasp datasets. GR-ConvNet is trained on a dataset (row) and evaluated on another dataset (column). For example, training on Jacquard and testing on Cornell yields an accuracy of $0.51$. Notably, training with our dataset improves performance by approximately $10-33\%$ compared to other datasets. 

\textbf{In the wild grasp detection.} Fig.~\ref{fig:in-the-wild} shows visualization results using LGD (no text) trained on our Grasp-Anything++ dataset on random internet images and other datasets images. We can see that the detected grasp poses are adequate in quality and quantity. This demonstrates that although our Grasp-Anything++ is fully created by foundation models without having any real images, models trained on our Grasp-Anything++ dataset still generalize well on real-world images.  

\begin{figure}[ht]
    \centering
    \includegraphics[width=\linewidth]{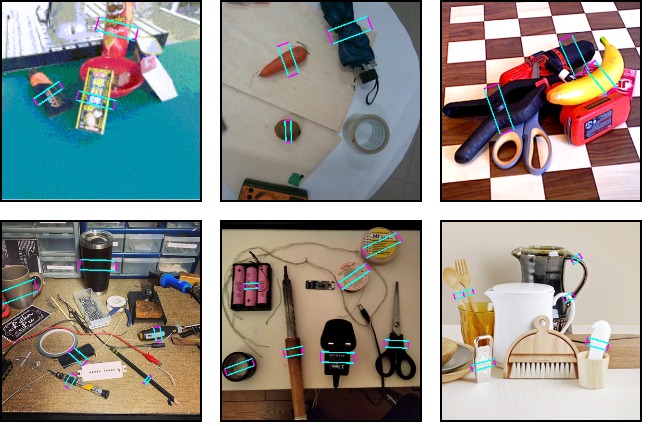}
    \vspace{-4ex}
\caption{\textbf{In the wild grasp detection}. Top row images are from GraspNet~\cite{fang2020graspnet}, YCB-Video~\cite{xiang2017posecnn}, NBMOD~\cite{cao2023nbmod} datasets; bottom row shows internet images.}
    \label{fig:in-the-wild}
    \vspace{-4ex}
\end{figure}

\subsection{Discussion}
Our experiments indicate that Grasp-Anything++ can serve as a foundation dataset for both language-driven and traditional grasp detection tasks. However, there are certain limitations. First, our dataset lacks depth images for directly being applied to robotic applications~\cite{newbury2023deep}. Second, we remark that the creation of our dataset is time-consuming and relies on access to the ChatGPT API. Fortunately, future research can reuse our provided assets (images, prompts, etc.) without starting from scratch. Furthermore, our experiments show that adding language to the grasp detection task (Table~\ref{table: language-driven grasp detection results}) poses a more challenging problem compared to standard grasp detection task (Table~\ref{table: base-to-new-grasp-detection}). 



We see several interesting future research directions. First, future work could investigate the use of text or image-to-3D models~\cite{xu2023neurallift} or image-to-depth~\cite{ranftl2021vision} and reuse our dataset's prompts and images to construct 3D language-driven grasp datasets. Additionally, beyond linguistic grasp instruction adherence, our dataset holds potential for varied applications, including scene understanding~\cite{xu2023human} and scene generation~\cite{arad2021compositional}, hallucination analysis~\cite{huang2023survey}, and human-robot interaction~\cite{cascante2022simvqa}.

\section{Conclusion}
We introduce Grasp-Anything++, a large-scale dataset with 1M images and 10M grasp prompts for language-driven grasp detection tasks. We propose LGD, a diffusion-based method to tackle the language-driven grasp detection task. Our diffusion model employs a contrastive training objective, which explicitly contributes to the denoising process. Empirically, we have shown that Grasp-Anything++ serves as a foundation grasp detection dataset. Finally, our LGD improves the performance of other baselines, and the real-world robotic experiments further validate the effectiveness of our dataset and approach.



{\small
\bibliographystyle{ieee_fullname}
\bibliography{egbib}
}

\newpage
\appendix

\section{Theoretical Findings}\label{sec: theoretical-findings}
In this section, we first show the derivation of Equation 4 in our main paper. We then show the proof of Proposition 1 in the main paper.

\subsection{Derivation of Equation 4} 
It was indicated in~\cite{dhariwal2021diffusion} that $q(\mathbf{x}_t\vert\mathbf{x}_{t-1})=\hat{q}(\mathbf{x}_t\vert\mathbf{x}_{t-1}, y)$, therefore, the loss in Equation 3 in our main paper can be written as

\begin{equation}
\label{eq: log-likelihood-2}
    \mathcal{L} = \mathbb{E}\left[-\log p_{\theta}(\mathbf{x}_T) - \sum_{t\geq 1}\log\frac{p_{\theta}(\mathbf{x}_{t-1}\vert\mathbf{x}_t)}{\hat{q}(\mathbf{x}_t\vert\mathbf{x}_{t-1}, y)}\right]\FullStop
\end{equation}

Using Bayes' Theorem, we can further derive the term $\hat{q}(\mathbf{x}_t\vert\mathbf{x}_{t-1}, y)$ of Equation~\ref{eq: log-likelihood-2} as follows

\begin{align*}
    \hat{q}&(\mathbf{x}_t\vert\mathbf{x}_{t-1}, y) = \frac{\hat{q}(\mathbf{x}_t, \mathbf{x}_{t-1}, y)}{\hat{q}(\mathbf{x}_{t-1}, y)} \\
    &= \frac{\hat{q}(\mathbf{x}_t, \mathbf{x}_{t-1}, y)}{\hat{q}(\mathbf{x}_t, \mathbf{x}_{t-1}, \mathbf{x}_0, y)}\frac{\hat{q}(\mathbf{x}_t, \mathbf{x}_{t-1}, \mathbf{x}_0, y)}{\hat{q}(\mathbf{x}_{t-1}, y)} \\
    &= \frac{1}{\hat{q}(\mathbf{x}_0\vert \mathbf{x}_{t-1}, \mathbf{x}_t, y)}\frac{\hat{q}(\mathbf{x}_t, \mathbf{x}_{t-1}, \mathbf{x}_0, y)}{\hat{q}(\mathbf{x}_t,\mathbf{x}_0, y)}\frac{\hat{q}(\mathbf{x}_t,\mathbf{x}_0, y)}{\hat{q}(\mathbf{x}_{t-1}, y)} \\
    &=\frac{\hat{q}(\mathbf{x}_{t-1}\vert\mathbf{x}_t, \mathbf{x}_0, y)}{\hat{q}(\mathbf{x}_0\vert \mathbf{x}_{t-1}, \mathbf{x}_t, y)}\frac{\hat{q}(\mathbf{x}_t,\mathbf{x}_0, y)}{\hat{q}(\mathbf{x}_{t-1}, y)}  \\
    &= \frac{\hat{q}(\mathbf{x}_{t-1}\vert\mathbf{x}_t, \mathbf{x}_0, y)}{\hat{q}(\mathbf{x}_0\vert \mathbf{x}_{t-1}, \mathbf{x}_t, y)}\frac{\hat{q}(\mathbf{x}_t,\mathbf{x}_0, y)}{\hat{q}(\mathbf{x}_{t-1}, \mathbf{x}_0, y)}\frac{\hat{q}(\mathbf{x}_{t-1}, \mathbf{x}_0, y)}{\hat{q}(\mathbf{x}_{t-1}, y)} \\
    &= \frac{\hat{q}(\mathbf{x}_{t-1}\vert\mathbf{x}_t, \mathbf{x}_0, y)}{\hat{q}(\mathbf{x}_0\vert \mathbf{x}_{t-1}, \mathbf{x}_t, y)}\frac{\hat{q}(\mathbf{x}_t\vert \mathbf{x}_0, y)}{\hat{q}(\mathbf{x}_{t-1}\vert \mathbf{x}_0, y)}\hat{q}(\mathbf{x}_0\vert \mathbf{x}_{t-1}, y)\FullStop
\end{align*}

Follow by Ho \textit{et al.}~\cite{ho2020denoising}, we can assume that $\hat{q}(\mathbf{x}_0\vert\mathbf{x}_{t-1}, \mathbf{x}_t, y) = \hat{q}(\mathbf{x}_0\vert\mathbf{x}_{t-1}, y)$ due to the Markov chain. Thus, $\hat{q}(\mathbf{x}_t\vert\mathbf{x}_{t-1}, y)$ can be further derived as follows

\begin{equation}
    \label{eq: simplify-form-1}
    \hat{q}(\mathbf{x}_t\vert\mathbf{x}_{t-1}, y) = \hat{q}(\mathbf{x}_{t-1}\vert\mathbf{x}_t, \mathbf{x}_0, y)\frac{\hat{q}(\mathbf{x}_t\vert \mathbf{x}_0, y)}{\hat{q}(\mathbf{x}_{t-1}\vert \mathbf{x}_0, y)}\FullStop
\end{equation}

From Equation~\ref{eq: log-likelihood-2} and Equation~\ref{eq: simplify-form-1}, we can express the negative log likelihood loss as follows

\begin{equation}
\label{eq: log-likelihood-3}
\begin{gathered}
L=\mathbb{E}\bigg[-\log\frac{p_\theta(\mathbf{x}_T)}{\hat{q}(\mathbf{x}_T\vert\mathbf{x}_0, y)}-\sum_{t>1}\log\frac{p_\theta(\mathbf{x}_{t-1}\vert\mathbf{x}_t)}{\hat{q}(\mathbf{x}_{t-1}\vert\mathbf{x}_t, \mathbf{x}_0, y)} \\
- \log p_\theta(\mathbf{x}_0\vert\mathbf{x}_1, y)\bigg]\FullStop
\end{gathered}
\end{equation}

By using Bayes' Theorem again, we can formulate $\hat{q}(\mathbf{x}_T\vert\mathbf{x}_0, y)$ of Equation~\ref{eq: log-likelihood-3} as follows

\begin{align}
    \hat{q}(\mathbf{x}_T\vert\mathbf{x}_0, y) &= \frac{\hat{q}(\mathbf{x}_T, \mathbf{x}_0, y)}{\hat{q}(\mathbf{x}_0, y)} \nonumber\\
    &=\frac{\hat{q}(\mathbf{x}_T, \mathbf{x}_0, y)}{\hat{q}(\mathbf{x}_0)}\frac{\hat{q}(\mathbf{x}_0)}{\hat{q}(\mathbf{x}_0, y)} \nonumber\\ &=\frac{\hat{q}(\mathbf{x}_0\vert\mathbf{x}_T, y)}{\hat{q}(y\vert\mathbf{x}_0)}\FullStop
\end{align}


Since $\hat{q}(y\vert\mathbf{x}_0)$ is known labels per sample~\cite{dhariwal2021diffusion}, thus, can be treated as a constant $C$. We conclude with the final derivation of Equation~\ref{eq: log-likelihood-3} by

\begin{align}
    L&=\mathbb{E}\bigg[-\log\frac{p_\theta(\mathbf{x}_T)}{\hat{q}(\mathbf{x}_0\vert\mathbf{x}_T, y)}+\log\hat{q}(y\vert\mathbf{x}_0) \nonumber\\
    &\qquad -\sum_{t>1}\log\frac{p_\theta(\mathbf{x}_{t-1}\vert\mathbf{x}_t)}{\hat{q}(\mathbf{x}_{t-1}\vert\mathbf{x}_t, \mathbf{x}_0, y)} -\log p_\theta(\mathbf{x}_0\vert\mathbf{x}_1, y)\bigg] \nonumber\\
    &= \mathbb{E}\bigg[C-\log p_\theta(\mathbf{x}_T) + \log\hat{q}(\mathbf{x}_0\vert\mathbf{x}_T, y) + \nonumber\\ 
    &\qquad \sum_{t>1}D_{\text{KL}}(\hat{q}(\mathbf{x}_{t-1}\vert\mathbf{x}_t, \mathbf{x}_0, y)\|p_\theta(\mathbf{x}_{t-1}\vert\mathbf{x}_t)) \nonumber\\
    &\qquad \qquad -\log p_\theta(\mathbf{x}_0\vert\mathbf{x}_1, y)\bigg]\FullStop
\end{align}


\subsection{Proof of Proposition 1} 

\begin{proof}
The correlation between $\mathbf{x}_0$ and $\mathbf{x}_T$ is given by~\cite{ho2020denoising}

\begin{align}
\label{eq: dependency}
    \mathbf{x}_T = \sqrt{\overline{\alpha}_T}\mathbf{x}_0 + \sqrt{1-\overline{\alpha}_T}\mathbf{\epsilon}, \mathbf{\epsilon}\sim\mathcal{N}(0, \mathbf{I})\FullStop
\end{align}
It follows from Equation~\ref{eq: dependency} that
\begin{align*}
	\frac{\sqrt{\overline{\alpha_T}}\tilde{\mathbf{x}}_0-\mathbf{x}_T}{\sqrt{1-\overline{\alpha_T}}}
	&=\frac{\sqrt{\overline{\alpha_T}}\tilde{\mathbf{x}}_0-\left( \sqrt{\overline{\alpha}_T}\mathbf{x}_0 + \sqrt{1-\overline{\alpha}_T}\mathbf{\epsilon} \right)}{\sqrt{1-\overline{\alpha_T}}}	
    \\ 
    &= \beta (\tilde{\mathbf{x}}_0 - \mathbf{x}_0) - \epsilon\Comma
\end{align*}
with $\beta = \sqrt{\frac{\overline{\alpha}_T}{1-\overline{\alpha}_T}}$. Thus, under the condition $\|\frac{\sqrt{\overline{\alpha_T}}\tilde{\mathbf{x}}_0-\mathbf{x}_T}{\sqrt{1-\overline{\alpha_T}}}\|^2_2 \geq M$, we have
\begin{align*}
	\mathbb{E}[\mathcal{L}_{\text{contrastive}}]
	&=\mathbb{E} \left[ \left\Vert \frac{\sqrt{\overline{\alpha_T}}\tilde{\mathbf{x}}_0-\mathbf{x}_T}{\sqrt{1-\overline{\alpha_T}}} \right\Vert_2^2 - M \right]\\
	&=\mathbb{E} \left[ \Vert \beta (\tilde{\mathbf{x}}_0 - \mathbf{x}_0) - \epsilon \Vert_2^2 - \Vert \epsilon \Vert_2^2 \right]\\
	&=\mathbb{E} \left[ \Vert \beta (\tilde{\mathbf{x}}_0  - \mathbf{x}_0)\Vert_2^2 - 2 \left< \beta (\tilde{\mathbf{x}}_0 - \mathbf{x}_0), \epsilon \right> \right]\\
	&=\mathbb{E} \left[ \Vert \beta (\tilde{\mathbf{x}}_0  - \mathbf{x}_0)\Vert_2^2 \right] - 2 \beta \mathbb{E} \left[ \left< (\tilde{\mathbf{x}}_0 - \mathbf{x}_0), \epsilon \right> \right]\FullStop
\end{align*}
However, since $\tilde{\mathbf{x}}_0 - \mathbf{x}_0$ and $\epsilon$ are independent and $\mathbb{E}[\epsilon]=0$, we have
$$\mathbb{E} \left[ \left< (\tilde{\mathbf{x}}_0 - \mathbf{x}_0), \epsilon \right> \right]
	= \left< \mathbb{E} \left[ \tilde{\mathbf{x}}_0 - \mathbf{x}_0 \right], \mathbb{E} \left[ \epsilon \right] \right>
	= 0\FullStop
	$$
Thus, 
\begin{align*}
\mathbb{E}[\mathcal{L}_{\text{contrastive}}] = \mathbb{E} \left[ \Vert \beta (\tilde{\mathbf{x}}_0  - \mathbf{x}_0)\Vert_2^2 \right] = \beta^2 \mathbb{E} \left[ \Vert \tilde{\mathbf{x}}_0  - \mathbf{x}_0 \Vert_2^2 \right]\FullStop
\end{align*}
Hence, with $C= \beta^{-2}$, we have $\mathbb{E} \left[ \Vert \tilde{\mathbf{x}}_0  - \mathbf{x}_0 \Vert_2^2 \right] \leq C \delta$ as desired.
\end{proof}

\section{Remark on Related Works}\label{sec: related-work}

\textbf{Grasp Datasets.} Numerous grasp datasets have been introduced recently~\cite{newbury2023deep}, each with varying characteristics such as data representation (RGB-D or 3D point clouds), grasp labels (rectangle-based or 6-DoF), and quantity~\cite{pourpanah2022review}. Our Grasp-Anything++ dataset differs primarily in its universality, contrasting the limited object selection in existing benchmarks. It covers a wide range of everyday objects and includes natural scene descriptions, facilitating research in \textit{language-driven grasp detection}. Furthermore, the Grasp-Anything++ dataset uniquely presents natural object arrangements, in contrast to the more strictly controlled configurations in previous datasets~\cite{platt2023grasp}. Grasp-Anything++ outperforms other benchmarks in both the number of objects and the number of samples.

\textbf{Contrastive Loss for Diffusion Models.} Recent advancements in contrastive learning have become a prominent attraction in diffusion model research, as evidenced in~\cite{han2023contrastive}. While most studies in the diffusion literature regard contrastive learning primarily as a method of data augmentation~\cite{hua2023multimodal} for improving the performance of models on fine-grained prediction~\cite{chen2023dual}, several notable works, including~\cite{wu2023diffusion, han2023contrastive, lee2023codi, yang2023zero}. For instance, Yang \textit{et al.}~\cite{yang2023zero}  leverage intermediate layer features for calculating contrastive loss in negative sample pairs. Contrary to these perspectives, our paper considers contrastive learning as an integral aspect of the training objective, explicitly contributing to the denoising process of diffusion models.

\textbf{Grasp Detection.} Deep learning has significantly advanced grasp detection, with initial efforts by Lenz \textit{et al.}\cite{lenz2015deep} employing deep learning for grasp pose detection. Following this, deep learning-based approaches\cite{yan2018learning, liang2019pointnetgpd, jiang2021synergies, wen2022catgrasp, ainetter2021end, kumra2020antipodal, cao2023nbmod} have become the predominant methodology in the field. Despite extensive research, the real-world application of deep learning for robotic grasping remains a challenge, primarily due to the limited size and diversity of grasp datasets~\cite{gilles2022metagraspnet, platt2023grasp}.

\begin{figure}[h]
    \centering
    \includegraphics[height=0.8\linewidth]{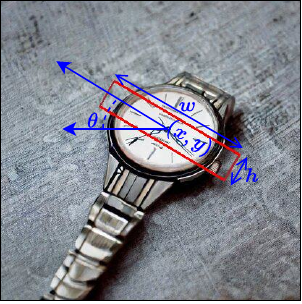}
    \vspace{-2ex}
    \caption{\textbf{Rectangle representation of grasp poses.} The groundtruth rectangle is defined with $5$ parameters $\{x, y, w, h, \theta \}$ which $(x, y)$ is the center point of the rectangle, $(w, h)$ is the width and height of the rectangle, and $\theta$ is the rotational angle of the rectangle with respect to the image plane.}
    \label{fig: rectangle-grasp poses}
\end{figure}

\textbf{Grasp Ground Truth Definition.} 6-DoF and rectangle representations are the most prevalent in grasp detection literature. While 6-DoF poses offer greater flexibility and adaptability~\cite{fang2020graspnet} for complex tasks, rectangle grasp poses are advantageous for their simplicity~\cite{jiang2011efficient}, efficiency in specific scenarios, and lower hardware and computational requirements~\cite{depierre2018jacquard}. Considering that objects in synthesized images from foundation models often lack 3D information~\cite{kamon1996learning}, the rectangle representation of grasp poses appears more suitable for our task. The ground truth rectangle is defined in Fig.~\ref{fig: rectangle-grasp poses}. 


\section{Grasp-Anything++ Analysis}\label{sec: dataset}

\textbf{Additional Visualization.} Fig.~\ref{fig:additional-visualization} provides further examples from the Grasp-Anything++ dataset, illustrating its diverse and extensive representation of everyday objects. The additional samples of Grasp-Anything++ showcase a diverse collection of objects typically found in everyday environments, such as home offices, kitchens, and living spaces. The collection includes a variety of shapes, sizes, and types of objects such as writing instruments, electronic devices, and household items, each situated within its own designated space. Furthermore, the annotated grasp poses, produced by the Grasp-Anything++'s pipeline, demonstrate high fidelity, thereby offering a foundation for both qualitative and quantitative grasp detection research.




\begin{figure*}[]
  \centering
  \setlength{\tabcolsep}{2pt}
  \resizebox{\textwidth}{!}{%
    \begin{tabular}{ccccc}
      \includegraphics[width=0.19\textwidth]{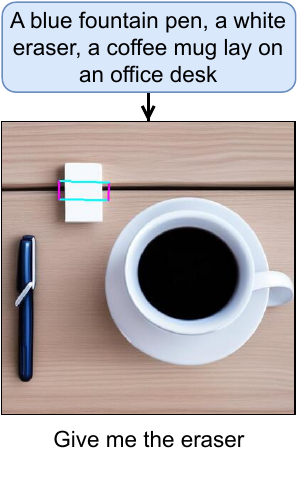} &
      \includegraphics[width=0.19\textwidth]{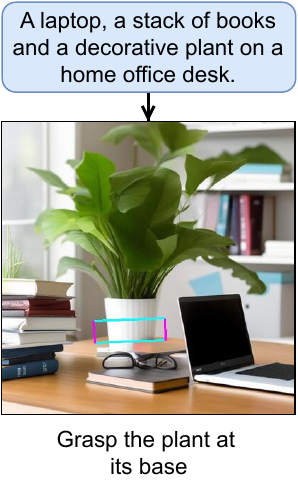} &
      \includegraphics[width=0.19\textwidth]{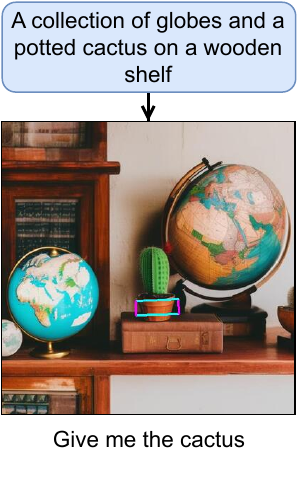} &
      \includegraphics[width=0.19\textwidth]{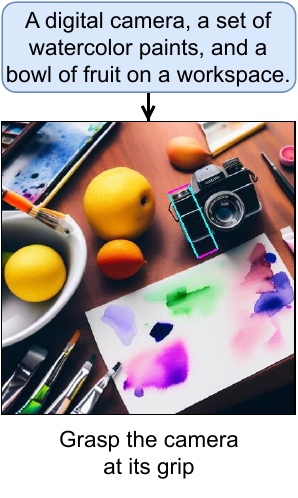} &
      \includegraphics[width=0.19\textwidth]{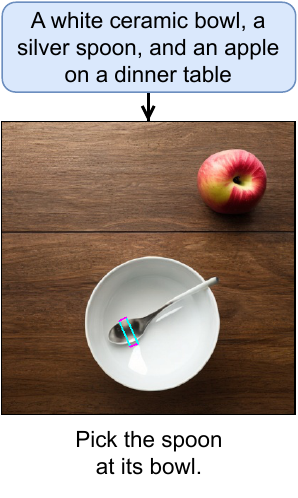} \\ [-4.0mm]
      \includegraphics[width=0.19\textwidth]{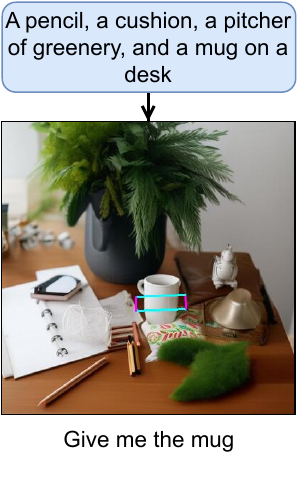} &
      \includegraphics[width=0.19\textwidth]{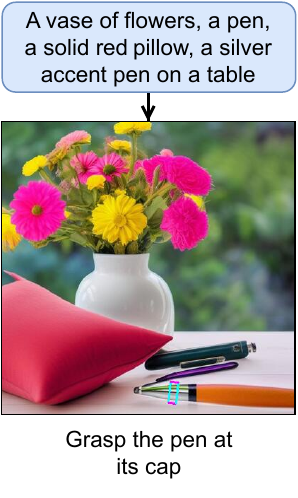} &
      \includegraphics[width=0.19\textwidth]{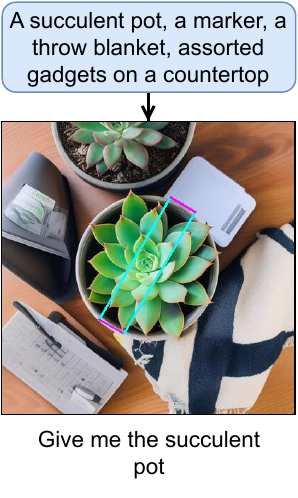} &
      \includegraphics[width=0.19\textwidth]{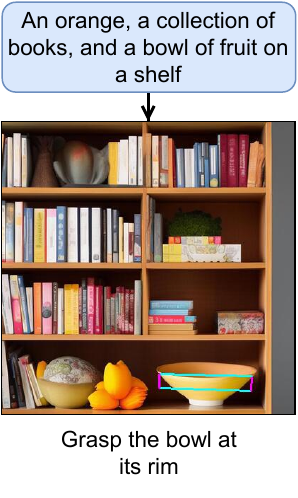} &
      \includegraphics[width=0.19\textwidth]{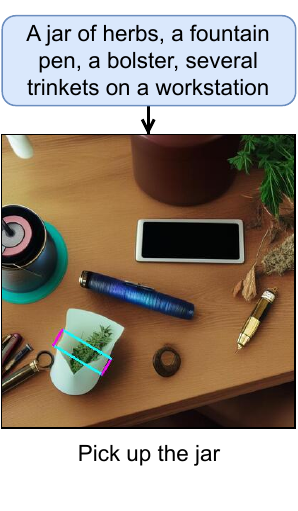} \\ [-4.0mm] 
      \includegraphics[width=0.19\textwidth,]{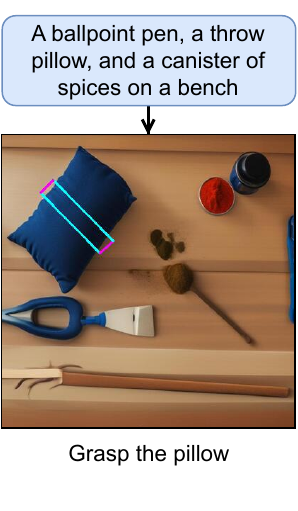} &
      \includegraphics[width=0.19\textwidth]{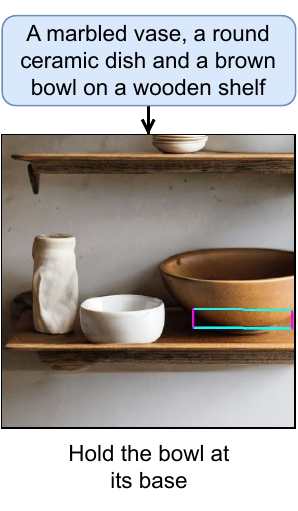} &
      \includegraphics[width=0.19\textwidth]{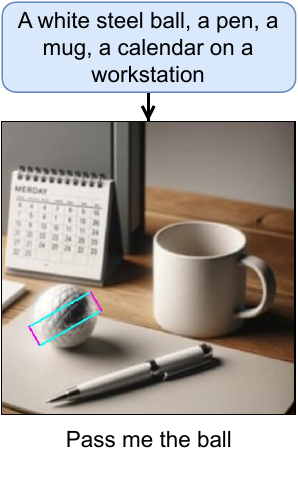} &
      \includegraphics[width=0.19\textwidth]{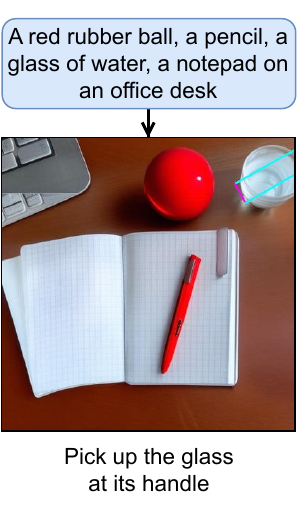} &
      \includegraphics[width=0.19\textwidth]{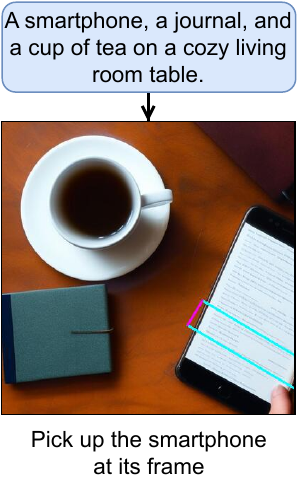} \\ [-3.0mm]
      \includegraphics[width=0.19\textwidth,]{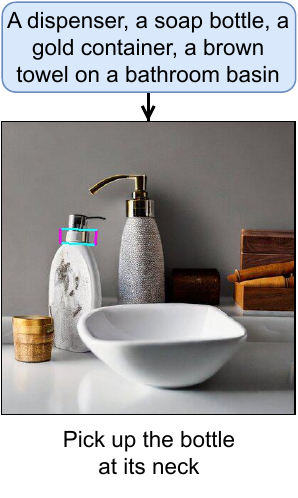} &
      \includegraphics[width=0.19\textwidth]{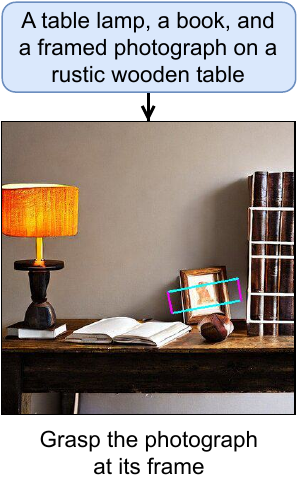} &
      \includegraphics[width=0.19\textwidth]{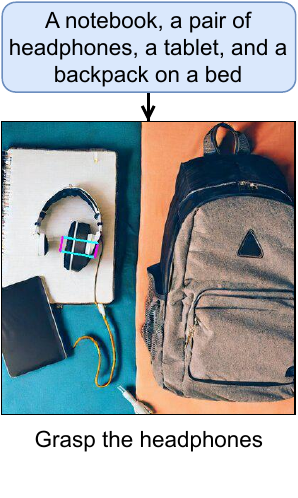} &
      \includegraphics[width=0.19\textwidth]{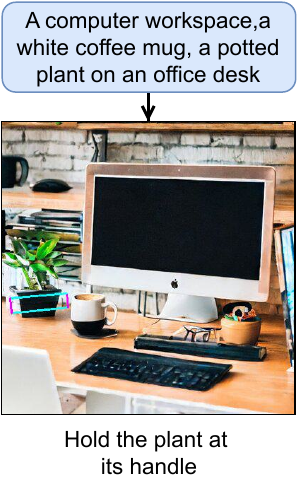} &
      \includegraphics[width=0.19\textwidth]{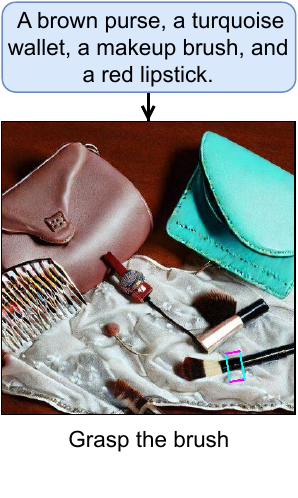} \\ [-4mm]
    \end{tabular}
  }
  \caption{\textbf{
  More samples of our Grasp-Anything++ dataset.}}
  \label{fig:additional-visualization}
\end{figure*}


\section{LGD Implementation Details}\label{sec: implementation-detail}

In this section, we first discuss the observation about the attention mask during the diffusion process which led to our motivation for using the contrastive diffusion. We then provide the implementation details of our LGD network.

\subsection{Observation} In Fig.~\ref{fig: observation}, we present a visualization of the attention mask generated by the vision transformer backbone and the grasp pose throughout the diffusion process. It is evident that during the initial time steps, there is a substantial overlap between the guiding region and the grasp pose, which diminishes as time progresses. This finding suggests that, by the end of the forward process, the guiding region and the noisy grasp pose can be regarded as a contrasting pair. We mathematically express this contrastive relationship between the guiding region $\tilde{\mathbf{x}}_0$ and $\mathbf{x}_t$. This contrastive relationship forms a central role in our network design, as depicted in the overview of our method in the main paper.

\begin{figure}[!ht]
    \centering
    \includegraphics[width=1\linewidth]{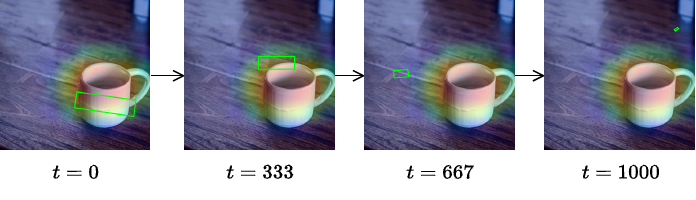}
    \caption{\textbf{Observation.} Comparison between the grasp poses and attention map output by the network backbone.}
    \label{fig: observation}
\end{figure}

\subsection{LGD Implementation Details} For an image $\textit{\textbf{I}}$ of resolution $W\times H$, we employ a ResNet-50 vision encoder backbone~\cite{he2016deep} to derive feature representations $\textit{\textbf{I}}^\prime\in\mathbb{R}^{w\times h}$ with latent dimensions $w$ and $h$. Similarly, we obtain text embedding $e^\prime\in\mathbb{R}^{|D|}$ using a text encoder, such as CLIP~\cite{radford2021learning} or BERT~\cite{devlin2018bert}. The dimensionality of these latent features depends on the text encoder's architecture.

\begin{table*}[!ht]
    \centering
    \begin{tabular}{@{}clccc@{}}
\toprule
Component & Description & Input size & Output size \cr \midrule
(\textit{i}) & A vision encoder (ResNet-50~\cite{he2016deep}) & $[H, W, 3]$ & $[h, w, 3]$ \\
(\textit{ii}-a) & A text encoder (CLIP~\cite{radford2021learning} or BERT~\cite{devlin2018bert}) & Any & $[|D|]$ \\
(\textit{ii}-b) & MLP layers & $[|D|]$ & $[N+1, d_\text{text}]$ \\
(\textit{iii}) & ALBEF~\cite{li2021align} & $[h, w, 3], [N+1, d_\text{text}]$ & $[W, H], [\ell]$ \\
(\textit{iv}) & MLP layers of Equation~\ref{eq: timestep-mlp} & $[1]$ & $[d_{\text{ts}}]$\\
(\textit{v}) & MLP layers of Equation~\ref{eq: vl-mlp} & $[\ell]$ & $[d_{\text{vl}}]$  \\
(\textit{vi}) & MLP layers of Equation~\ref{eq: df-mlp} & $[M]$ & $[d_{\text{df}}]$  \\
(\textit{vii}) & MLP layers to output denoising state & $[d_{\text{ts}}+d_{\text{vl}}+d_{\text{df}}]$ & $[M]$ \\
\bottomrule
\end{tabular}
\captionof{table}{\textbf{Architecture specifications of our method.}}
\label{tab: architecture_lgd}
\end{table*}

\begin{table}[!ht]
    \centering
    \begin{tabular}{@{}lccc@{}}
\toprule
Hyperparameter &  Value \cr \midrule
$W$ & 224 \\
$H$ & 224 \\
$N$  & 196
\\
$M$ (number of grasp parameters) & 5
\\
$|D_{\text{CLIP}}|$ of (\textit{ii}-a) & 512 \\
$|D_{\text{BERT}}|$ of (\textit{ii}-a) & 768 \\
$d_{\text{text}}$ of (\textit{ii}-b) & 128 \\
$\ell$ of (\textit{iii}) & 1024 \\
$d_{\text{ts}}$ of (\textit{iv}) & 32 \\
$d_{\text{vl}}$ of (\textit{v}) & 256 \\
$d_\text{df}$ of (\textit{vi}) & 256 \\
Num. attention layers (ALBEF) & 6 \\
\bottomrule
\end{tabular}
\captionof{table}{\textbf{Hyperparameter details.}}
\label{tab: hyper_params}
\end{table}

Utilizing the ALBEF architecture~\cite{li2021align}, we integrate vision and language embeddings. Specifically, we encode each intermediate feature $\textit{\textbf{I}}^\prime$ into a set $\textit{\textbf{v}} = \{\textit{\textbf{v}}_\text{cls}, \textit{\textbf{v}}_{1}, \textit{\textbf{v}}_{2},\dots, \textit{\textbf{v}}_{N}\}$, where $N$ denotes the number of segmented patches, similar to the approach in~\cite{dosovitskiy2020image}. We feed text embeddings $e^\prime$ through MLP layers, producing a sequence of embeddings $\textit{\textbf{u}}=\{\textit{\textbf{u}}_\text{cls}, \textit{\textbf{u}}_{1}, \ldots, \textit{\textbf{u}}_{N}\}$. Cross-attention mechanisms in the multimodal encoder integrate image features $\textit{\textbf{v}}$ with text features $\textit{\textbf{u}}$. The multimodal attention layer outputs an attention map $\tilde{\mathbf{x}}_0\in\mathbb{R}^{W\times H}$ and a final text-image representation $z_{\text{vl}}^\ast\in\mathbb{R}^{\ell}$.

Using a sequence of MLP layers, we integrate features from timestep $t+1$, the current state $\mathbf{x}_{t+1}$, and the final text-image representation $z\in\mathbb{R}^{\ell}$ as follows
\begin{equation}
\label{eq: timestep-mlp}
    z_{\text{ts}} = \text{MLP}(t+1) \in\mathbb{R}^{d_{\text{ts}}}\FullStop
\end{equation}
\begin{equation}
\label{eq: vl-mlp}
    z_{\text{vl}} = \text{MLP}(z_{\text{vl}}^\ast)\in\mathbb{R}^{d_{\text{vl}}}\FullStop
\end{equation}
\begin{equation}
\label{eq: df-mlp}
    z_{\text{df}} = \text{MLP}(\mathbf{x}_{t+1})\in\mathbb{R}^{d_{\text{df}}}\FullStop
\end{equation}

Finally, we concatenate $z_{\text{ts}}, z_{\text{vl}}, z_{\text{df}}$ to form $z$, which is then processed through an additional MLP to yield the decoded state for the denoising process.
\begin{equation}
    \label{eq: mlp-denoising}
    \mathbf{x}_{t} = \text{MLP}(z)\in\mathbb{R}^{d_{\text{df}}}\FullStop
\end{equation}

\textbf{Architecture Summarization.} As outlined in the main paper, our network architecture contains: (\textit{i}) a vision encoder, (\textit{ii}) a text encoder followed by MLP layers, (\textit{iii}) ALBEF module, (\textit{iv}) MLP layers to encode timestep information, (\textit{v}) MLP layers to encode text-image features $z_{\text{vl}}^\ast$,   (\textit{vi}) MLP layers to encode noisy state $\mathbf{x}_t$, and (\textit{vii}) MLP layers to output denoising state. We summarize the architecture and hyperparameters of LGD in Table~\ref{tab: architecture_lgd} and Table~\ref{tab: hyper_params}.

\section{Experimental Setups}\label{sec-exp-setup} 
We present implementation details of other baselines in the language-driven grasp detection task.

\subsection{Baseline Setups}
\textbf{Linguistic versions of GR-ConvNet~\cite{kumra2020antipodal}, Det-Seg-Refine~\cite{ainetter2021end}, GG-CNN~\cite{morrison2018closing}}. We make slight modifications to these baselines by adding a component to fuse image and text features from the input. Specifically, we utilize the CLIP text encoder~\cite{radford2021learning} to extract text embeddings $e^\prime$. To ensure a fair comparison between methods, we also utilize ALBEF architecture~\cite{li2021align} to do the fusion between the text embedding and the visual features. The remaining training loss and parameter are inherited from the original work.   



\textbf{CLIPORT}~\cite{shridhar2022cliport}. The original CLIPORT architecture learns a policy $\pi$, which does not directly solve our task. We modify the CLIPORT architecture's final layers with appropriately sized MLPs to output grasp poses defined by five parameters $(x, y, w, h, \theta)$. This adaptation ensures consistency with our grasp detection baselines, diverging from CLIPORT's original policy $\pi$ learning framework.

\textbf{CLIP-Fusion}~\cite{xu2023joint}. In our re-implementation of the architecture from~\cite{xu2023joint}, we follow the cross-attention module in CLIP-Fusion with constructed MLP layers. The final MLP layers in the architecture is modified to output five parameters, corresponding to predicted grasp poses.

\begin{figure}[h]
\centering
\def\svgwidth{1\columnwidth}
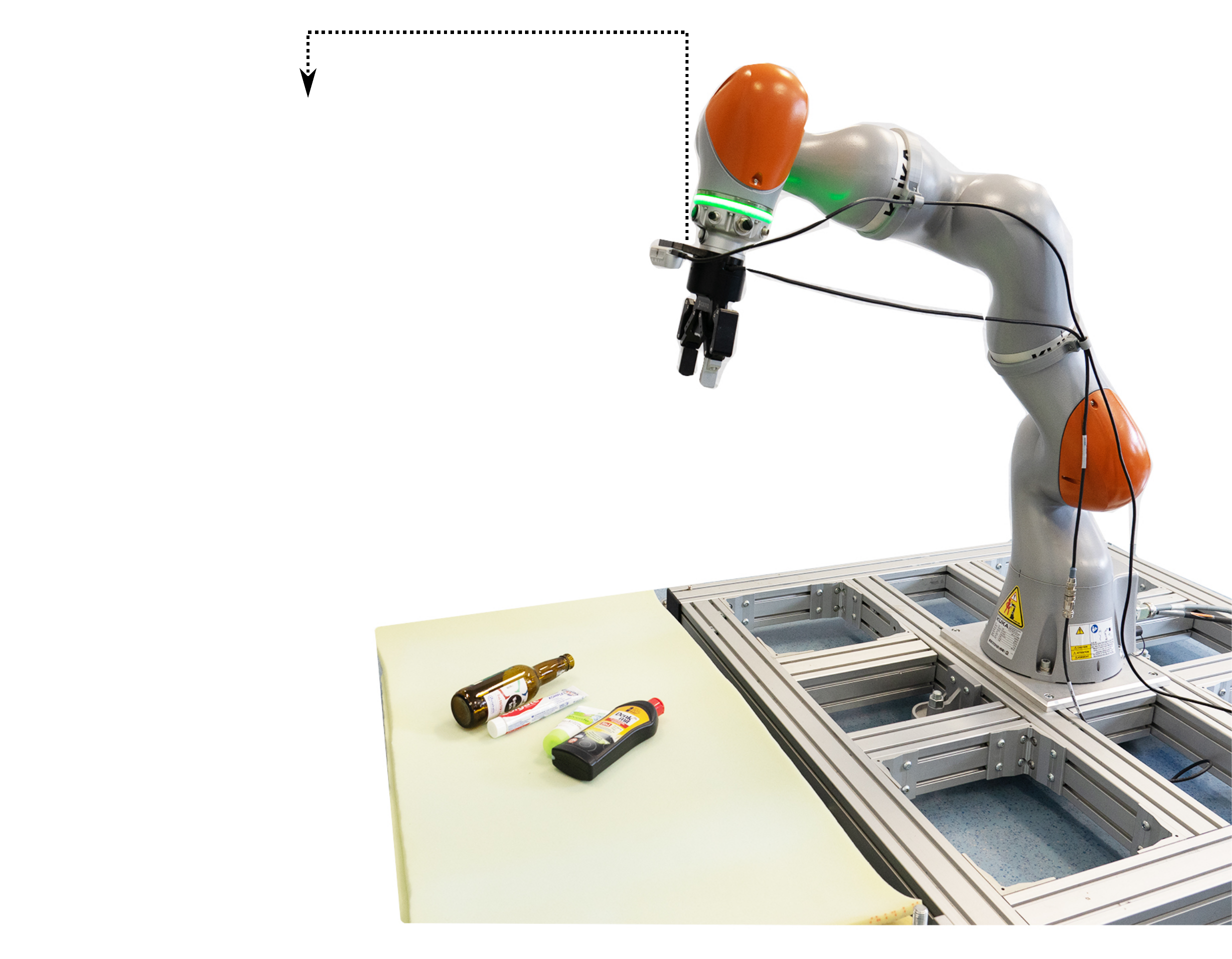

\vspace{1pt}
\caption{\textbf{Overview of the robotic experiment setup.}}
\label{fig: robot demonstration}
\vspace{-0.2ex}
\end{figure}

\begin{figure*}[h]
    \includegraphics[width=\linewidth]{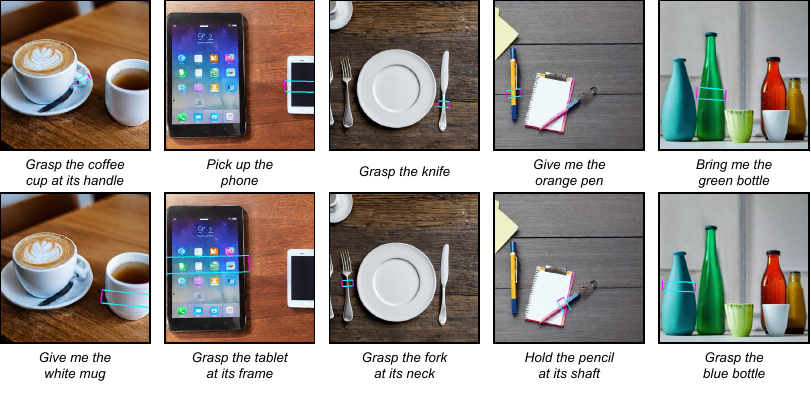}
    \vspace{-3ex}
    \caption{\label{fig: more-visualization} \textbf{Additional language-driven grasp detection visualizations.} We provide fine-grained grasp detection cases, our method successfully differentiates between objects of similar structure but varying color, such as a green bottle and a blue bottle.}
\end{figure*}

\subsection{Robotic Setup}
In Figure~\ref{fig: robot demonstration}, we present the robotic evaluation conducted on a KUKA robot. Our grasp detection leverages our proposed LGD and other methods listed in the real robot experiments (Table 4 of the main paper), and the results are translated into a 6DOF grasp pose through depth images captured by an Intel RealSense D435i depth camera, as in~\cite{kumra2020antipodal}. 
The trajectory planner~\cite{vu2023machine,beck2022singularity} is employed for the execution of the grasp. We use two computers for the experiment. The first computer (PC1) runs the real-time control software Beckhoff TwinCAT, the Intel RealSense D435i camera, and the Robotiq 2F-85 gripper, while the second computer (PC2) runs ROS on Ubuntu Noetic 20.04. PC1 communicates with the robot via a network interface card (NIC) using the EtherCAT protocol. The inference process is performed on PC2 with an NVIDIA 3080 GPU. Our assessment encompasses both single-object and cluttered scenarios, involving a diverse set of $20$ real-world daily objects (Fig.~\ref{fig:20objects}). To ensure robustness and reliability, we repeat each experiment for all methods a total of $30$ times.

\section{Extra Experiments}
\label{sec-extra-results}

\textbf{Number of Parameter Comparison.} Table~\ref{table: computation-time} shows the number of parameters in all methods. This table illustrates that the results from the language-driven grasp detection studies in the main paper reveal a consistent trade-off between performance and the number of parameters across all baselines. Notably, LGD emerges as the balanced baseline, offering a good balance of performance efficiency and computational resource utilization.


\begin{table}[!ht]
    \centering
    \renewcommand
\tabcolsep{4pt}
\hspace{1ex}
\resizebox{\linewidth}{!}{
    \begin{tabular}{@{}rcc@{}}
\toprule
Baseline & $\#$Parameters & Success rate\\ 
\midrule
GR-ConvNet~\cite{kumra2020antipodal} + CLIP~\cite{radford2021learning}  & 2.07M  & 0.24\\ 
Det-Seg-Refine~\cite{ainetter2021end} + CLIP~\cite{radford2021learning} & 1.82M  & 0.20\\ 
GG-CNN~\cite{morrison2018closing} + CLIP~\cite{radford2021learning} & 1.24M  & 0.10\\
CLIPORT~\cite{shridhar2022cliport} & 10.65M & 0.29\\
CLIP-Fusion~\cite{xu2023joint} & 13.51M & 0.33\\
\textbf{LGD (ours)} &  5.18M & 0.45\\
\bottomrule
\end{tabular}
}
\caption{\label{table: computation-time} \textbf{Number of parameter comparison.}}
\end{table}



\begin{figure}[h]
    \centering
    \includegraphics[width=\linewidth]{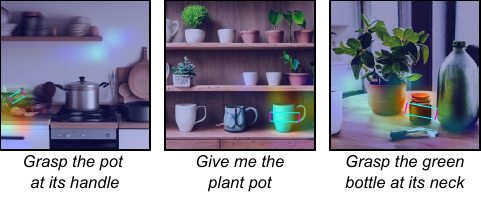}
    \caption{\textbf{Prediction failure cases.} In cases where objects have similar structures, such as jars and bottles, our LGD occasionally fails to detect correct grasp poses.}
    \label{fig:failure-cases}
\end{figure}
\textbf{Failure Cases.} Though achieving satisfactory results, our method still predicts incorrect grasp poses. A large number of objects and grasping prompts in our dataset suggest a significant challenge for the tasks. Some failure cases are depicted in Fig.~\ref{fig:failure-cases}. From this figure, we can see that the correlation between the text and the attention map of the visual features is not well-aligned, which leads to incorrect prediction of the grasp poses.



\begin{figure}[ht]
\centering
\subfloat[]{\label{fig:realsens1}\includegraphics[height=35mm]{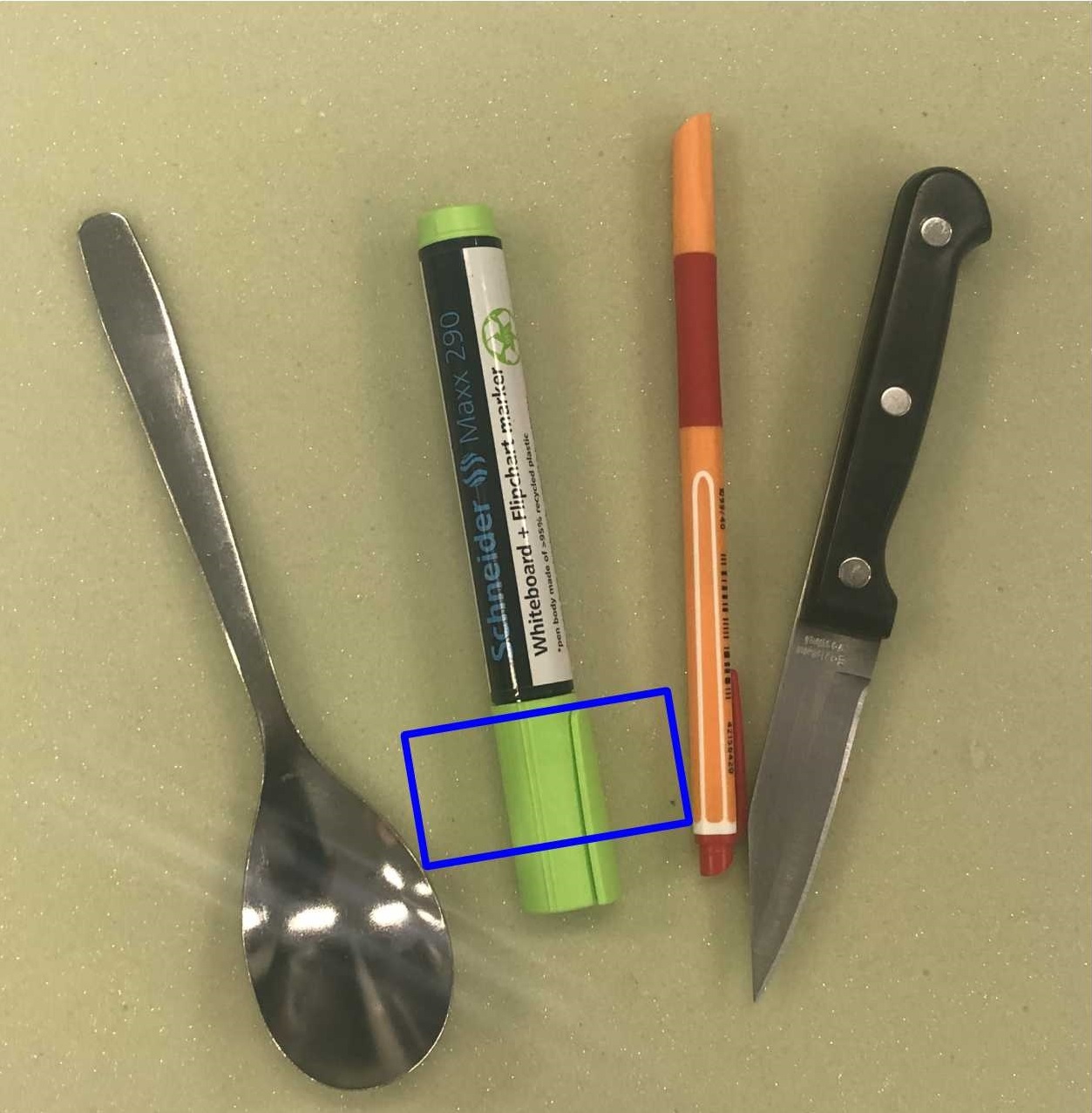}}
\hspace{2ex}
\subfloat[]{\label{fig:realsens2}\includegraphics[height=35mm]{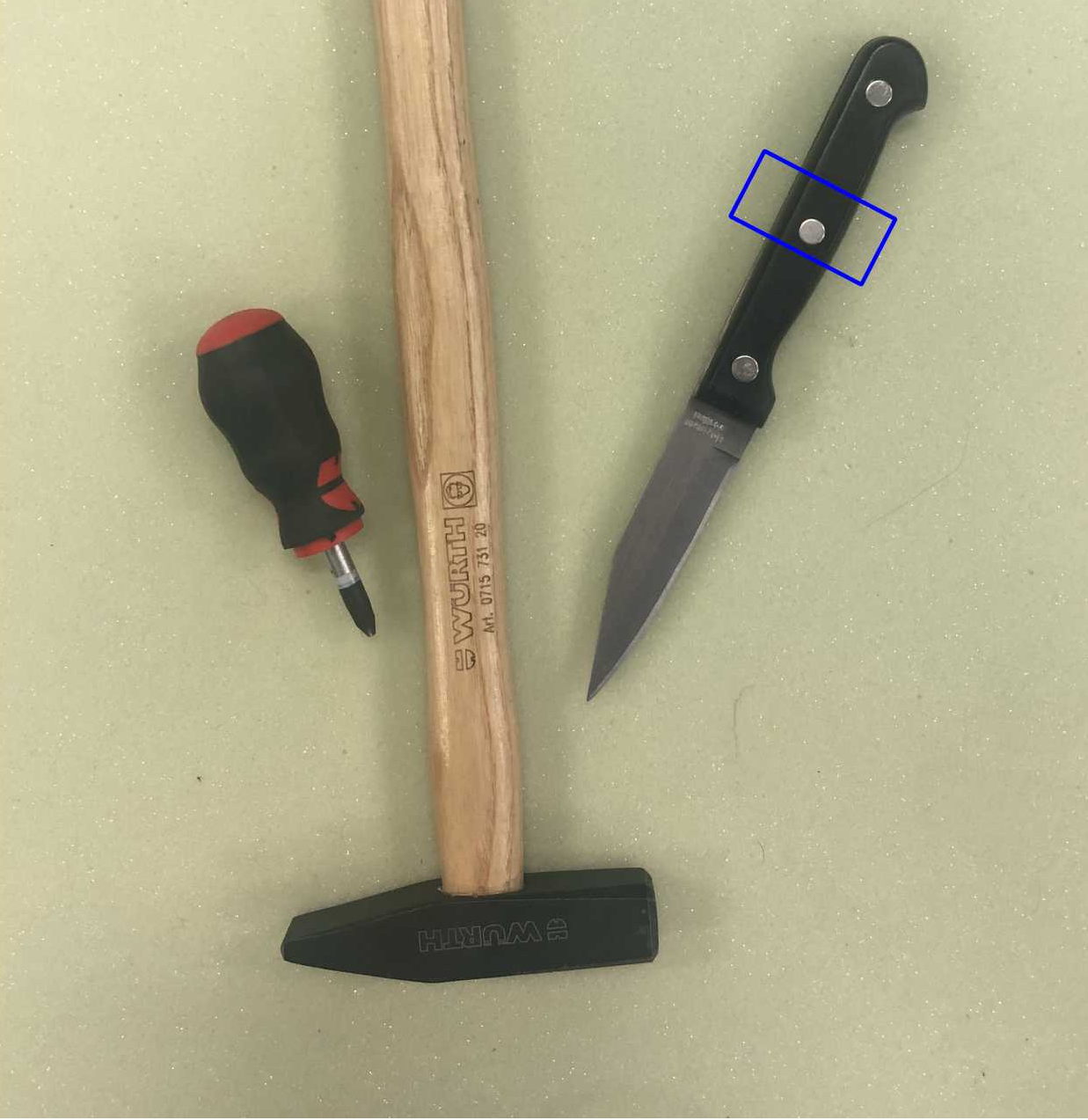}}
\vspace{-1ex}
\caption{\textbf{Detection results in robotic experiments}. Images are captured from a RealSense camera with experiments in Fig~\ref{fig:robot-motion}.}
\label{fig:realsense-result}
\end{figure}

\begin{figure*}[h]
    \centering
    \includegraphics[width=\linewidth, height=0.25\linewidth]{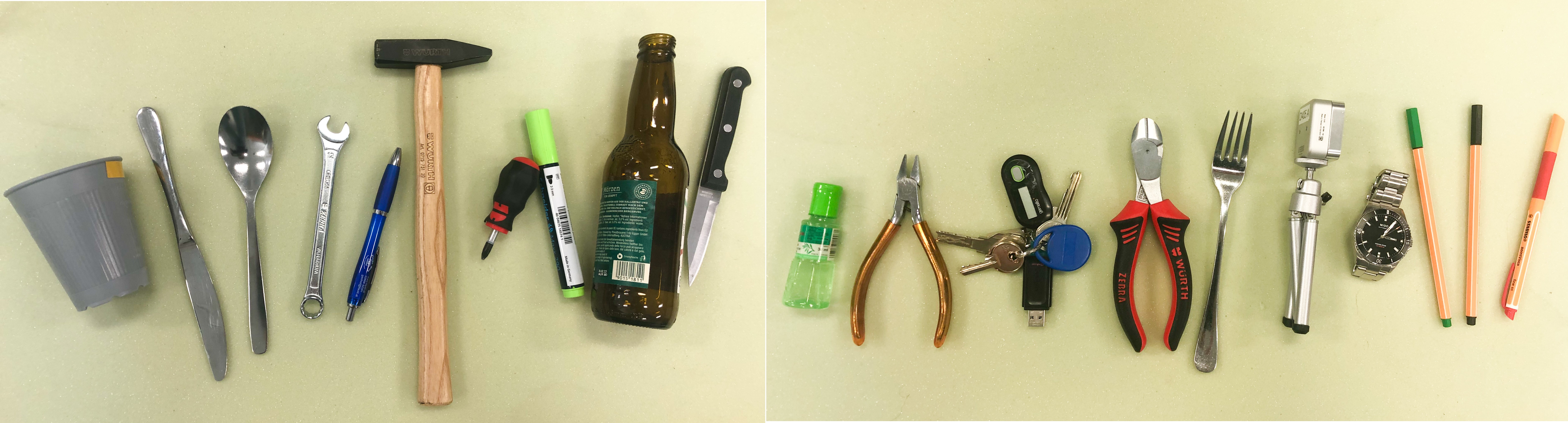}
    \caption{\textbf{Set of 20 objects used in the robotic experiment.}}
    \label{fig:20objects}
\end{figure*}

\textbf{Additional Detection Visualization.} Fig.~\ref{fig: more-visualization} illustrates additional language-driven grasp pose detection using the LGD method. The result demonstrates our method's capability in reasonably aligning grasp poses with linguistic instructions in fine-grained scenarios, as seen with a green and blue bottle. Remarkably, these qualitative examples demonstrate the effectiveness of our proposed LGD method in fine-grained cases, in line with the contrastive loss objectives outlined in our main paper.

\textbf{Robotic Demonstration.} In Fig.~\ref{fig:robot-motion}, we show a sequence of actions when the KUKA robot grasps different objects in cluttered scenes. Fig.~\ref{fig:realsense-result} further shows the detection result of our LGD method on an image captured by a RealSense camera mounted on the robot. The robotic experiments demonstrate that although our LGD method is trained on a synthesis Grasp-Anything++ dataset, it still be able to generalize to detect grasp pose in real-world images. More illustrations can be found in our Demonstration Video.

\begin{figure*}[h]
    \centering
    \includegraphics[width=\linewidth,height=0.35\linewidth]{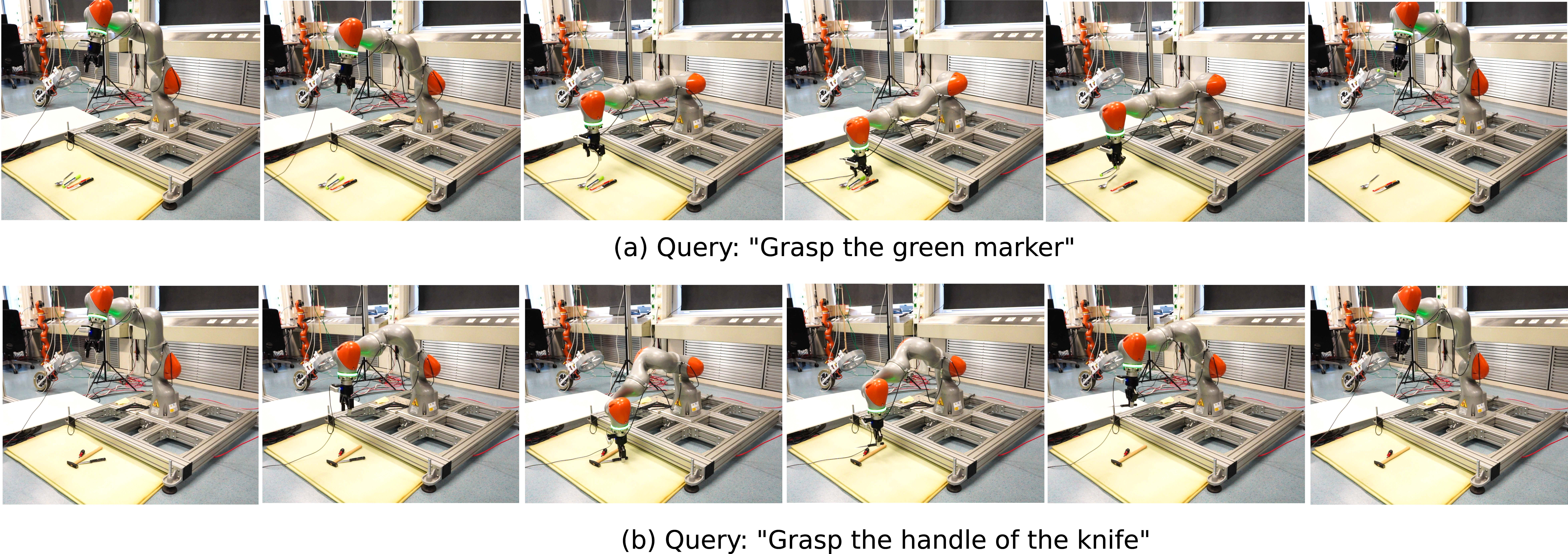}
    \vspace{-2ex}\caption{\textbf{Snapshots of two example robotic experiments.}}
    \label{fig:robot-motion}
\end{figure*}


\end{document}